\newtheorem{definition}{Definition}
\newtheorem{theorem}{Theorem}
\newtheorem{lemma}{Lemma}
\newtheorem{proposition}{Proposition}
\newcommand{\Mat}{\boldsymbol}
\newcommand{\real}{\mathbb{R}}
\newcommand{\complex}{\mathbb{C}}
\newcommand{\FT}{\mathcal{F}}
\newcommand{\DC}[1]{\mathcal{DC}\left[ {#1} \right]}
\newcommand{\HC}[1]{\mathcal{HC}\left[ {#1} \right]}
\newcommand{\cmark}{\ding{52}}
\newcommand{\ccross}{\ding{56}}
\begin{document}

\title{ParaFormer: A Generalized PageRank Graph Transformer for Graph Representation Learning}

\author{Chaohao Yuan}
\affiliation{
  \institution{Tsinghua Shenzhen International Graduate School, Tsinghua University}
  \city{Shenzhen}
  \country{China}
}
\affiliation{
  \institution{The Chinese University of Hong Kong}
  \city{Hong Kong S.A.R.}
  \country{China}
}
\email{chaohaoyuan@link.cuhk.edu.hk}

\author{Zhenjie Song}
\author{Ercan Engin Kuruoglu}
\affiliation{
  \institution{Tsinghua Shenzhen International Graduate School, Tsinghua University}
  \city{Shenzhen}
  \country{China}
}
\email{songzj23@mails.tsinghua.edu.cn}

\email{kuruoglu@sz.tsinghua.edu.cn}

\author{Kangfei Zhao}
\author{Yang Liu}
\affiliation{
  \institution{The Chinese University of Hong Kong}
  \city{Hong Kong S.A.R.}
  \country{China}
}
\email{zkf1105@gmail.com}

\email{yliuweather@gmail.com}

\author{Deli Zhao}
\affiliation{
  \institution{DAMO Academy, Alibaba Group, Hupan Lab}
  \city{Hangzhou}
  \country{China}}
\email{zhaodeli@gmail.com}

\author{Hong Cheng}
\affiliation{
  \institution{The Chinese University of Hong Kong}
  \city{Hong Kong S.A.R.}
  \country{China}
}
\email{hcheng@se.cuhk.edu.hk}

\author{Yu Rong}
\authornote{Corresponding Author.}
\affiliation{
  \institution{DAMO Academy, Alibaba Group, Hupan Lab}
  \city{Hangzhou}
  \country{China}}
\email{yu.rong@hotmail.com}

\renewcommand{\shortauthors}{Yuan et al.}

\begin{abstract}
Graph Transformers (GTs) have emerged as a promising graph learning tool, leveraging their all-pair connected property to effectively capture global information. To address the over-smoothing problem in deep GNNs, global attention was initially introduced, eliminating the necessity for using deep GNNs. However, through empirical and theoretical analysis, we verify that the introduced global attention exhibits severe over-smoothing, causing node representations to become indistinguishable due to its inherent low-pass filtering. This effect is even stronger than that observed in GNNs.
To mitigate this, we propose \textbf{Pa}ge\textbf{Ra}nk Trans\textbf{former} (ParaFormer), which features a PageRank-enhanced attention module designed to mimic the behavior of deep Transformers. We theoretically and empirically demonstrate that ParaFormer mitigates over-smoothing by functioning as an adaptive-pass filter.
Experiments show that ParaFormer achieves consistent performance improvements across both node classification and graph classification tasks on 11 datasets ranging from thousands to millions of nodes, validating its efficacy. The supplementary material, including code and appendix, can be found in https://github.com/chaohaoyuan/ParaFormer.
\end{abstract}

\begin{CCSXML}
<ccs2012>
   <concept>
       <concept_id>10010147.10010257.10010321</concept_id>
       <concept_desc>Computing methodologies~Machine learning algorithms</concept_desc>
       <concept_significance>500</concept_significance>
       </concept>
   <concept>
       <concept_id>10010147.10010257.10010293.10010294</concept_id>
       <concept_desc>Computing methodologies~Neural networks</concept_desc>
       <concept_significance>500</concept_significance>
       </concept>
   <concept>
 </ccs2012>
\end{CCSXML}

\ccsdesc[500]{Computing methodologies~Machine learning algorithms}
\ccsdesc[500]{Computing methodologies~Neural networks}

\keywords{Graph Transformers, Graph Neural Networks, Over-smoothing}

\maketitle

\section{Introduction}
\label{sec: introduction}

The graph is a fundamental data structure for representing complex relationships in real-world systems~\cite{largegraph-app-3, largegraph-app-2}, ranging from social networks, molecular interactions to recommendation systems and knowledge graphs. 
In recent years, Graph Neural Networks (GNNs)~\cite{GCN17, GAT, GCNII-20, h2gcn-20, APPNP18, GPRGNN21, yuan2025non} have been a prevailing deep learning framework for graph learning, demonstrating impressive performance in various tasks, e.g., node classification and graph classification. 
However, general GNNs focus on capturing local structural patterns through a neighborhood aggregation paradigm, inherently limiting their ability to model global topological characteristics and long-range dependencies between distant nodes~\cite{wu2021representing, dwivedi2022long}. 
This shortcoming becomes a performance bottleneck in scenarios requiring holistic graph understanding, such as graph-level property prediction~\cite{Graphformer21} or reasoning over multi-hop relational paths~\cite{bambergermeasuring}.
To bridge this gap, researchers have proposed Graph Transformers~\cite{graph-transformer-survey-architecture-22, yuan2025survey} that synergistically integrate topological information with the global attention mechanisms of vanilla Transformers~\cite{Transformer}. 

However, existing Graph Transformers suffer from a significant limitation: \textit{over-smoothing}. This phenomenon
results in node representations becoming nearly indistinguishable, and generally occurs when the pairwise L2 distance between representations falls below a specific threshold. The intrinsic cause lies in the spectral properties of these models: Transformers act as low-pass filters~\cite{attention-oversmoothing-vit}, suppressing high-frequency signals (i.e., information that changes drastically) within the graph structure.  
Unfortunately, while low-frequency signals often capture principal features in vision and language data, high-frequency signals in graphs are frequently crucial, particularly for tasks like node classification in heterophilic graphs~\cite{zhu2020beyond}. 
Consequently, the inherent low-pass nature of Transformers severely hampers their generalization ability across various graph topologies. This fundamental limitation leaves existing Graph Transformers facing a dilemma: they cannot effectively leverage deep architectures – a key factor in the success of Transformers in natural language processing~\cite{bert-2019, radford2019language}, computer vision~\cite{vit2021,DBLP:conf/iclr/LiuZCTZ0L25} and biological applications~\cite{grover20,yuan2026transformer, yuan2025annotation} – without succumbing to detrimental over-smoothing on graph data.

Thus, a significant challenge is raised: \textit{How to effectively model global topological characteristics and long-range dependencies while simultaneously mitigating over-smoothing?} Although regularization or normalization techniques like DropEdge~\cite{Dropedge-20,10195874} and PairNorm~\cite{Zhao2020PairNorm} enable stacking deeper GNN layers, these architectures fundamentally rely on the local message-passing mechanism, inherently limiting their ability to capture global information, especially in very large graphs. Furthermore, even residual connections, another common approach, have been theoretically proven ineffective against over-smoothing in graph learning~\cite{huang2022tacklingoversmoothinggeneralgraph,attention-based-oversmoothing-23}. Graph Transformers are initially speculated to not suffer from over-smoothing. However, a deep theoretical understanding to verify this has been lacking.

In this work, we first show that Transformers exhibit an even stronger over-smoothing tendency than GNNs (i.e., Proposition~\ref{prop:comparsion}), which stems from the fully-connected attention mechanism in Transformers compared to the sparse connectivity in GNNs. Inspired by the PageRank~\cite{PageRank-98} ability of capturing multi-hop relations, we propose a novel Graph Transformer architecture, \textbf{Pa}ge\textbf{Ra}nk Trans\textbf{Former} (ParaFormer). It incorporates the message passing mechanism of Generalized PageRank (GPR) into the attention computation, called Generalized PageRank Attention (GPA), preserving the distinction of node representations in the propagation through deep attention blocks by sharing the weights of these attention blocks. We theoretically show that ParaFormer performs adaptively as a low-pass and a high-pass filter, and mitigates the over-smoothing issue. 
In addition, to reduce the computational complexity of the power iteration of GPR, we devise a scalable GPA with a linear time complexity, which is a close approximation of the original GPA.
We extensively evaluate ParaFormer on 11 node classification and 2 graph classification datasets, including both homophily and heterophily graphs with sizes ranging from thousands to millions of nodes. The datasets cover citation networks, Wikipedia-based graphs, social networks, and recommendation graphs. Results show that ParaFormer outperforms state-of-the-art GNNs and Graph Transformers.

Our main contributions are summarized as follows:
\begin{itemize}[leftmargin=*]
    \item We theoretically and empirically demonstrate that the over-smoothing issue substantially undermines the effectiveness of deep Transformers in the field of graph learning. 
    \item We propose ParaFormer, an innovative GT that integrates GPR into the Transformer architecture. By serving as a learnable filter, ParaFormer resists over-smoothing while preserving long-range and multi-hop modeling capabilities. 
    \item To improve scalability, we design a scalable attention mechanism for ParaFormer, enabling a linear computational complexity without sacrificing model performance. 
    \item Through comprehensive experimental evaluations across diverse graph datasets, we validate that ParaFormer effectively alleviates over-smoothing and achieves superior performance in node-level and graph-level classification tasks.
\end{itemize}

\section{Preliminaries}
A graph can be represented as $\mathcal G = (\mathcal V, \mathcal E)$, where $\mathcal V$ denotes the set of nodes and $\mathcal E$ denotes the set of edges. The node set $\mathcal V$ comprises $n$ nodes, associated with a feature matrix $\mathbf{X} \in \mathbb R^{n \times d}$ and a label matrix $\mathbf{Y} \in \mathbb R^{n \times c}$, where $d$ and $c$ represent the dimension of the node features and the number of classes, respectively. The edge set $\mathcal E$ can define an adjacency matrix $\Mat{A} \in \mathbb R^{n \times n}$, where $\Mat{A}_{u,v} = 1$ if there exists an edge for node pair $(u, v)$ in $\mathcal E$, and $\Mat{A}_{u,v} = 0$ otherwise. 

The primary objective of graph learning is to learn the node representations $z$ that can be used for various downstream tasks, such as graph classification and node classification tasks. 

\paragraph{\textbf{Message-Passing Graph Neural Networks.}}
The core mechanism of mainstream Graph Neural Network (GNN) methods is the message passing mechanism, which aggregates the embeddings of neighboring nodes. This process can be described by the following framework:

\begin{equation}
\begin{split}
        z_{u}^{(k)} = f^{(k)}(z_{u}^{(k)}), z_{u} = \sum_{v \in N(u)} h^{(k)}(z_v),
\label{message passing}
\end{split}
\end{equation}
where $k$ represents the $k$-th layer in the GNN, $N$ denotes the receptive field in this GNN, which normally consists of the neighbors, and $h$ indicates the aggregation function of this GNN.

\begin{figure*}[t]
    \centering
    \includegraphics[width=0.24\textwidth]{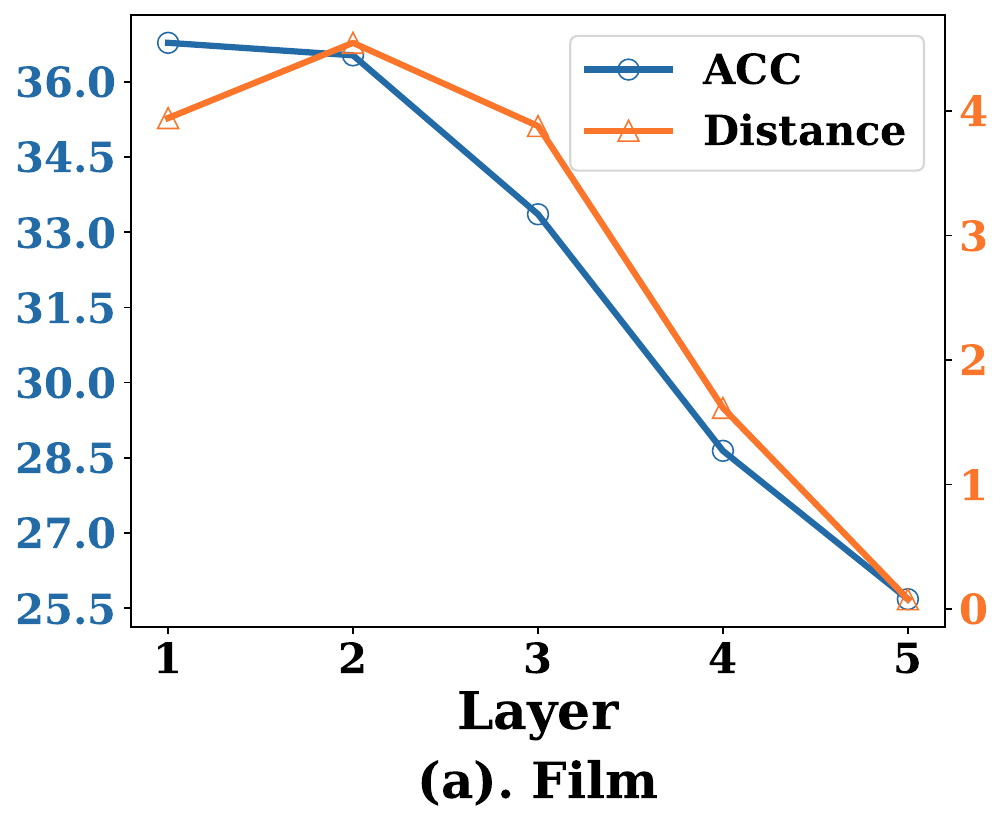}
    \includegraphics[width=0.24\textwidth]{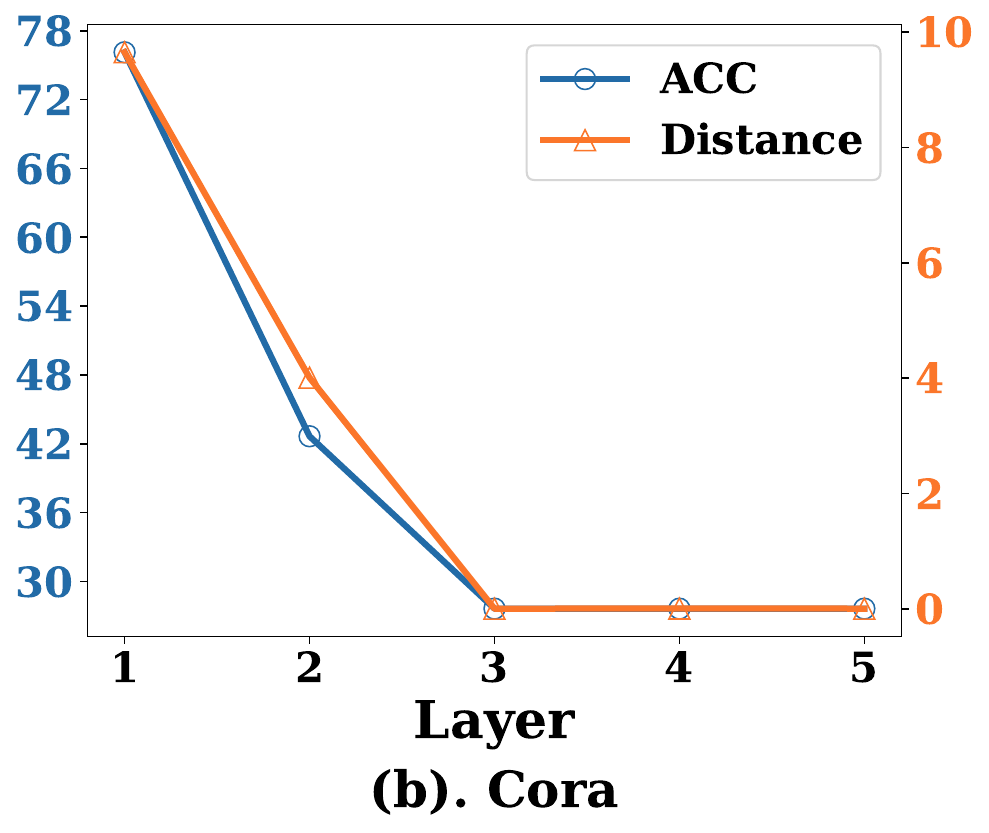}
    \includegraphics[width=0.24\textwidth]{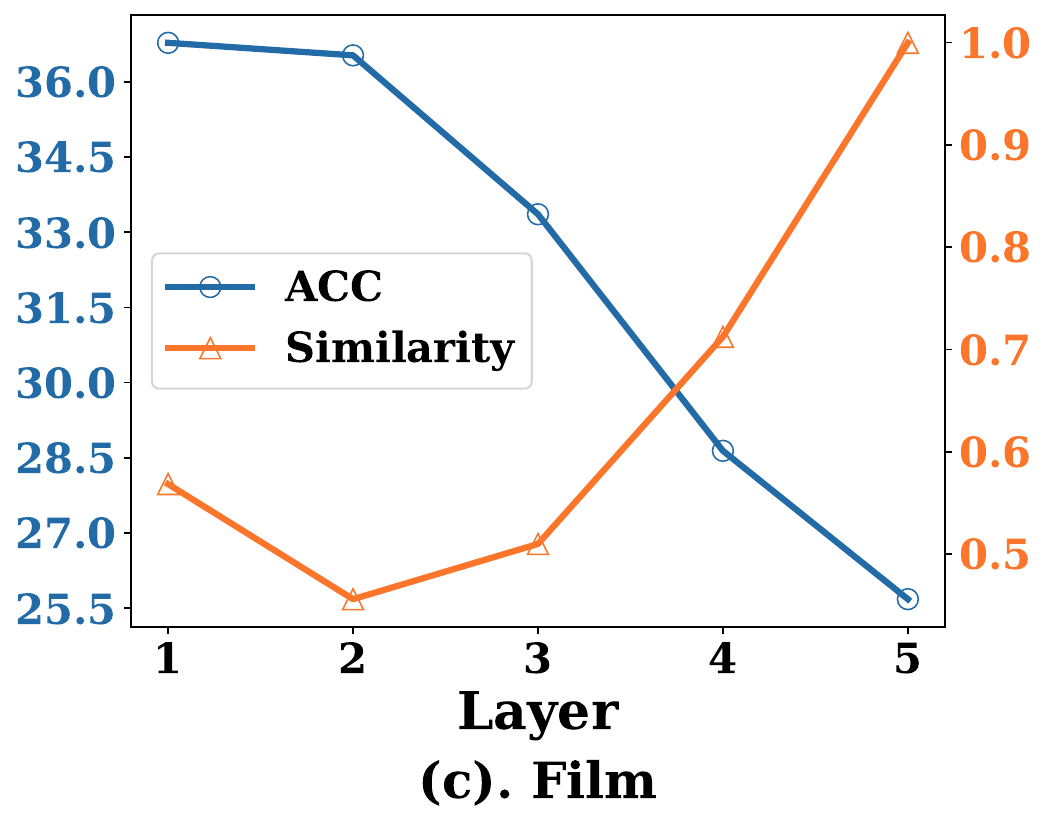}
    \includegraphics[width=0.24\textwidth]{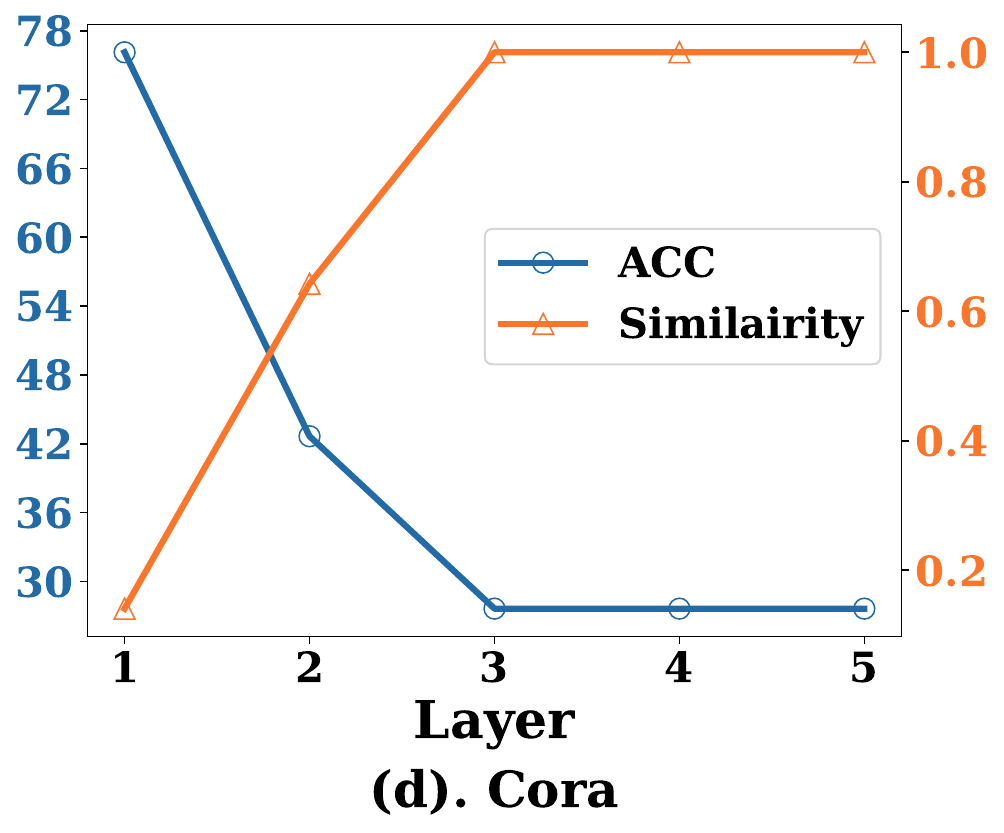}
    \vspace{-2ex}
    \caption{(a) and (b): Accuracy vs. L2 distance of final-layer Transformer representations across depths. (c) and (d): Accuracy vs. cosine similarity of final-layer representations.
    }
    \label{fig:oversmoothing}
\end{figure*}

\paragraph{\textbf{Graph Transformers.}}

Transformers have demonstrated remarkable success across diverse fields, achieving high performance in natural language processing~\cite{bert-2019} and computer vision~\cite{vit2021} applications. 
Transformers employ an encoder-decoder architecture, 

where the building block is the self-attention~\cite{Transformer} mechanism.

The self-attention module operates on a sequence of $n$ tokens with feature dimension $d$, $\mathbf{H} \in \mathbb{R}^{n\times d}$, which can be represented by a transformation function $\phi_\theta(\cdot) : \mathbb{R}^{n \times d} \rightarrow \mathbb{R}^{n \times d}$, formulated as:
\begin{align}
\mathbf{Q} &= \mathbf{H} \mathbf{W_{Q}}, \mathbf{K} = \mathbf{H} \mathbf{W_{K}}, \mathbf{V} = \mathbf{H} \mathbf{W_{V}}, \label{eq:qkv}\\
\mathbf{\hat{A}} &= \text{Softmax}\left(\frac{\mathbf{Q}\mathbf{K}^{T}}{\sqrt{d}}\right),
{\mathbf{\hat{H}}} = \mathbf{\hat{A}} \mathbf{V}, \label{eq:attention:final}
\end{align}
where the input $\mathbf{H}$ is first transformed into query, key and value matrices, $\mathbf{Q}, \mathbf{K}, \mathbf{V} \in \mathbb{R}^{d\times d}$, by linear weight matrices $\mathbf{W_{Q}}, \mathbf{W_{K}}, \mathbf{W_{V}} \in \mathbb{R}^{d\times d}$, respectively. 
Then, the attention matrix $\mathbf{\hat{A}} \in \mathbb{R}^{n \times n}$ is computed by the inner product of the query and key, followed by the normalization of a $\text{Softmax}(\cdot)$ function.
Here, $\mathbf{\hat{A}}_{ij} \in [0, 1]$ indicates the influence of $\mathbf{H}_j$ on $\mathbf{H}_i$. 
The attention matrix is finally applied to the value matrix to generate a new embedding of the tokens $\mathbf{\hat{H}} \in \mathbb{R}^{n \times d}$. In practice, Transformers employ multi-head self-attention that applies multiple parallel self-attention in different subspaces to the input and concatenates the result. 

From the perspective of GNN, the self-attention module can be viewed as a graph attention network~\cite{GAT} over a fully-connected graph. The attention matrix $\mathbf{\hat{A}}$ serves as a learned `soft' adjacency matrix of a fully connected graph, which is used to fuse global relationships in the graph. Recently, many studies have explored the possibility of incorporating edge information as a strong inductive bias in vanilla Transformers to model graph-structured data. Techniques include auxiliary GNNs, attention biases, positional embeddings, and attention masks, collectively referred to as Graph Transformers~\cite{graph-transformer-survey-architecture-22}. 
Typically, the computational and space complexity of conventional Transformers and Graph Transformers is $\mathcal{O}(n^2)$, i.e., quadratic with respect to the number of tokens/nodes.

\section{Over-smoothing in Transformers}
\label{sec:over-smoothing}
Over-smoothing refers to a phenomenon where, as the number of layers in a GNN increases, the node representations (embeddings) become indistinguishable with each other, which weakens the translatability and expressivity of GNNs~\cite{Dropedge-20}. Recently, \cite{attention-oversmoothing-bert, attention-oversmoothing-vit} reveal that the self-attention module in Transformers also demonstrate the over-smoothing property that it will act as a low-pass filter and gradually lose the high-frequency information as the number of layers increases. Formally, the smoothing rate $\lambda$ of the self-attention module can be defined as:
\begin{definition} \label{def:sa_rate}
    Let $\mathbf{\hat{A}} = \text{Softmax}(\mathbf{P})$,  $\mathbf{P} = \mathbf{Q}\mathbf{K}^T/\sqrt{n}$.  $\alpha = \max_{i,j} \lvert \mathbf{P}_{ij} \rvert$. Define $\hat{\mathbf{H}} = \mathbf{\hat{A}}\mathbf{V}$ as the output of a self-attention module, then the smoothing rate $\lambda$ of self-attention module is defined as the lower bound of high-frequency intensity ratio to the pre- and post- attention aggregation: 
    \begin{align}
        \lambda = \inf\left(\frac{\lVert \HC{{\mathbf{H}}} \rVert_F}{\lVert \HC{\hat{\mathbf{H}}} \rVert_F}\right),
    \end{align}
    where $\HC{\mathbf{H}}$ represents the complementary high-frequency component of signal $\mathbf{H}$ under the Fourier transform $\mathcal{F}$. A formal definition of $\HC{\mathbf{H}}$ can be found in \href{https://github.com/chaohaoyuan/ParaFormer/}{Appendix \ref{appendix:explain}}.  
\end{definition}

From~Definition~\ref{def:sa_rate}, a larger $\lambda$ value indicates accelerated over-smoothing. In~\cite{attention-oversmoothing-vit}, the authors have proven that the smoothing rate $\lambda$ of the vanilla self-attention module is $\sqrt{\frac{ e^{2\alpha} + n - 1}{n e^{2\alpha} \lVert\mathbf{W}_V\rVert_2}}$.

To validate the over-smoothing effects in Transformers, we conduct an empirical study of over-smoothing on a vanilla Transformer by varying depths. Specifically, we train vanilla Transformers with depths (number of layers) ranging from 1 to 5 on the Cora and Film datasets for node classification. We employ two metrics, pairwise L2-distance $\mathcal{D}_{\text{L2}}$ and cosine similarity $\mathcal{S}_{\cos}$ of all nodes.
Mathematically, these two metrics can be written as:
\begin{align}
    \mathcal{D}_{\text{L2}} &= \frac{1}{n(n-1)} \sum_{i=1}^{n} \sum_{j \neq i}^{n} \left\| \mathbf{H}_i - \mathbf{H}_j \right\|_2, \\
    \mathcal{S}_{\cos} &= \frac{1}{n(n-1)} \sum_{i=1}^{n} \sum_{j \neq i}^{n} \frac{\mathbf{H}_i \cdot \mathbf{H}_j}{\left\| \mathbf{H}_i \right\|_2 \left\| \mathbf{H}_j \right\|_2},
\end{align}
where $n$ is the number of nodes in the graph and $\mathbf{H}$ indicates the learned node representations. According to the definition, we can find lower L2-distance or higher cosine similarity indicate stronger over-smoothing effect.

Figure~\ref{fig:oversmoothing} delineates the curves of classification accuracy and the average L2-distance between the embeddings in the final layer of all pairs of nodes in the graph. As shown in Figure~\ref{fig:oversmoothing}, the strong correlation between the accuracy and the embedding distance indicates that over-smoothing is a critical factor that undermines the performance of deep Transformers.

The low-pass nature of image and text representations renders Transformers relatively immune to over-smoothing effects. However,  features in a graph may not always exhibit the low-pass property, e.g., in heterophilic graphs~\cite{zheng2024graphneuralnetworksgraphs}. Graph Transformer would suffer from performance degradation due to over-smoothing. Theoretically, we can prove that:

\begin{proposition}
\label{prop:comparsion}
Given a self-attention module $\text{SA}$ and graph attention network $\text{GAT}$ initialized with the same weight matrix, we have: $\lambda_{\text{SA}} \ge \lambda_{\text{GAT}}$.
\end{proposition}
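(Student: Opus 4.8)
The plan is to collapse each module's smoothing rate onto a single spectral quantity and then compare the two quantities through their connectivity. Because $\text{SA}$ and $\text{GAT}$ are initialized with the same $\mathbf{W_Q},\mathbf{W_K},\mathbf{W_V}$, they share the same score matrix $\mathbf{P}=\mathbf{Q}\mathbf{K}^T/\sqrt{n}$, hence the same $\alpha$ and the same $\|\mathbf{W_V}\|_2$; the only difference is that the $\text{GAT}$ softmax is supported on the neighborhood $N(i)$ and vanishes on every non-adjacent pair. First I would put Definition~\ref{def:sa_rate} in operator form: writing $\mathbf{M}=\mathbf{I}-\tfrac{1}{n}\mathbf{1}\mathbf{1}^T$ for the projector onto the high-frequency subspace $\{\mathbf{x}:\mathbf{1}^T\mathbf{x}=0\}$, the relations $\HC{\mathbf{H}}=\mathbf{M}\mathbf{H}$ and $\hat{\mathbf{A}}\mathbf{1}=\mathbf{1}$ give $\HC{\hat{\mathbf{H}}}=\mathbf{M}\hat{\mathbf{A}}\mathbf{M}\mathbf{H}\mathbf{W_V}$, so that $\lambda=\bigl(\sigma_2\,\|\mathbf{W_V}\|_2\bigr)^{-1}$ where $\sigma_2=\|\mathbf{M}\hat{\mathbf{A}}\mathbf{M}\|_2$ is the operator norm of the mean-centered attention on that subspace. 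Since the two modules share $\|\mathbf{W_V}\|_2$, the claim $\lambda_{\text{SA}}\ge\lambda_{\text{GAT}}$ reduces to the purely spectral inequality $\sigma_2^{\text{SA}}\le\sigma_2^{\text{GAT}}$.

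Next I would establish why full connectivity forces strong contraction, which is also the mechanism behind the closed form quoted below Definition~\ref{def:sa_rate}. Because all $n$ logits lie in $[-\alpha,\alpha]$, every $\text{SA}$ weight satisfies $\hat{\mathbf{A}}^{\text{SA}}_{ij}\ge\delta:=1/(n e^{2\alpha})>0$, so I can peel off a uniform rank-one floor, $\hat{\mathbf{A}}^{\text{SA}}=\delta\,\mathbf{1}\mathbf{1}^T+(1-n\delta)\mathbf{B}$ with $\mathbf{B}$ row-stochastic. On the high-frequency subspace $\mathbf{M}\mathbf{1}=\mathbf{0}$ annihilates the floor, and the residual $(1-n\delta)\mathbf{B}$ contracts every mean-zero direction, which yields exactly the $n$-dependent rate $\lambda_{\text{SA}}$ and shows $\sigma_2^{\text{SA}}$ is bounded strictly below $1$ for every $\alpha$.

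For $\text{GAT}$ the same peeling is impossible: the entries on non-adjacent pairs are exactly zero, so no positive multiple of $\mathbf{1}\mathbf{1}^T$ can be subtracted and the aggregation pulls each node only toward its \emph{local} neighborhood mean, never toward the global mean. I would make $\sigma_2^{\text{GAT}}\ge\sigma_2^{\text{SA}}$ rigorous by constructing a high-frequency witness $\mathbf{x}\perp\mathbf{1}$ that is (nearly) constant within each neighborhood but differs across a weakly connected cut of $\mathcal G$; on such $\mathbf{x}$ the operator $\mathbf{M}\hat{\mathbf{A}}^{\text{GAT}}\mathbf{M}$ is close to the identity, so its norm approaches $1$, whereas the full-support $\text{SA}$ operator contracts every high-frequency direction by the factor of the previous step. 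Feeding this back through the reduction gives $\lambda_{\text{SA}}=(\sigma_2^{\text{SA}}\|\mathbf{W_V}\|_2)^{-1}\ge(\sigma_2^{\text{GAT}}\|\mathbf{W_V}\|_2)^{-1}=\lambda_{\text{GAT}}$.

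The hard part will be this final spectral comparison. The tempting shortcut --- ``$\text{GAT}$ averages over fewer tokens, so it smooths less'' --- is not a proof, because a smaller support simultaneously raises the per-entry floor to $1/(d_i e^{2\alpha})$, pushing in the opposite direction; the effect must instead be read off the global $(n-1)$-dimensional high-frequency subspace, where sparsity leaves inter-neighborhood variation essentially untouched. Proving $\sigma_2^{\text{GAT}}\ge\sigma_2^{\text{SA}}$ uniformly over arbitrary topologies, degree sequences, and data-dependent logits is the delicate step, and I expect it to need either a Dobrushin/ergodicity-coefficient estimate for $\hat{\mathbf{A}}^{\text{GAT}}$ or the explicit near-disconnection witness sketched above.
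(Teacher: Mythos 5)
Your route is genuinely different from the paper's, and the reduction of Definition~\ref{def:sa_rate} to the centered operator norm $\sigma_2=\lVert\mathbf{M}\hat{\mathbf{A}}\mathbf{M}\rVert_2$ (using $\hat{\mathbf{A}}\mathbf{1}=\mathbf{1}$ and $\mathbf{M}^2=\mathbf{M}$) is the right way to make the claim precise, as is the rank-one minorization $\hat{\mathbf{A}}^{\text{SA}}_{ij}\ge 1/(ne^{2\alpha})$ for the fully supported softmax. But the decisive step --- $\sigma_2^{\text{GAT}}\ge\sigma_2^{\text{SA}}$ for \emph{every} graph, degree sequence, and realization of the logits --- is exactly the part you do not prove, and your witness construction only covers graphs with a weak cut: a vector that is (nearly) constant on every closed neighborhood yet non-constant exists only up to an error controlled by the conductance of $\mathcal G$, so for a well-connected sparse graph (an expander, say) the witness degenerates and nothing in the sketch excludes $\sigma_2^{\text{GAT}}<\sigma_2^{\text{SA}}$. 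A Dobrushin estimate will not rescue this either: for two non-adjacent nodes with disjoint neighborhoods the ergodicity coefficient of $\hat{\mathbf{A}}^{\text{GAT}}$ is $1$, which gives an upper bound of $1$ on the contraction, not the lower bound on $\sigma_2^{\text{GAT}}$ that the comparison requires. As written, the proposal establishes the proposition only for the subclass of graphs admitting your cut witness. (A smaller issue: the peeling $\hat{\mathbf{A}}^{\text{SA}}=\delta\mathbf{1}\mathbf{1}^T+(1-n\delta)\mathbf{B}$ with $\mathbf{B}$ row-stochastic does not by itself give $\sigma_2^{\text{SA}}<1$, since a row-stochastic $\mathbf{B}$ can have spectral norm up to $\sqrt{n}$; the quantitative bound needs the $\lVert\cdot\rVert_1$-type control used in the cited ViT analysis.)

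The paper's own proof sidesteps the spectral comparison entirely and is essentially two lines: it writes the GAT logits as $\mathbf{P}^{\text{GAT}}=\mathbf{P}\odot\mathbf{A}$ with non-edges masked to $-\infty$, observes that this can only increase $\alpha=\max_{i,j}\lvert\mathbf{P}_{ij}\rvert$, and invokes the monotone decrease in $\alpha$ of the closed form $\lambda(\alpha)=\sqrt{(e^{2\alpha}+n-1)/(ne^{2\alpha}\lVert\mathbf{W}_V\rVert_2)}$ quoted below Definition~\ref{def:sa_rate} to conclude $\lambda_{\text{GAT}}\le\lambda_{\text{SA}}$. That argument is crude (it applies a formula derived for full-support softmax at an effectively infinite $\alpha$), whereas your operator framing is more principled and, if completed, would yield a structure-aware strengthening; but completing it requires precisely the uniform bound over arbitrary topologies that you correctly identify as the hard step and leave open, so the proposal is not yet a proof of the stated proposition.
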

\begin{proof}
 Given that self-attention can be conceptualized as GAT operating on a fully-connected graph, we can denote the matrix of GAT as $\Mat{P}^{\text{GAT}} = \Mat{P} \odot \Mat{A}$, then $\Mat{P}^{\text{GAT}}$ will set all $0$ to $-\inf$ to ensure $0$ after Softmax. Consequently,  $\max_{i,j} \lvert \Mat{P}_{ij} \rvert \le \max_{i,j} \lvert \Mat{P}_{ij}^{\text{GAT}} \lvert$, $\lambda$ is monotonically decreasing with respect to $\alpha$.
\end{proof}

Proposition~\ref{prop:comparsion} suggests that, compared to GAT, the self-attention module exhibits a reduced likelihood of retaining high-frequency information, indicative of a more pronounced over-smoothing effect.
Intuitively, in the message passing process, Transformers establish full connectivity between all nodes, while GAT relies on sparse adjacency matrices. This structural difference enables faster information propagation in Transformers, accelerating the over-smoothing of node representations. 
Therefore, it is imperative to design Graph Transformers with the ability to alleviate the over-smoothing  while preserving high-frequency graph information.

\section{Methodology}
In this section, we first introduce the overall architecture of ParaFormer. Then,  we propose to exploit the linear attention mechanism to optimize its time complexity from $\mathcal{O}(Kn^3)$ to $\mathcal{O}(Kn)$, where $K$ is the number of internal layers. At last, we provide the theoretical analysis of ParaFormer.

\begin{figure*}[t]
    \centering
    \includegraphics[width=0.9\textwidth]{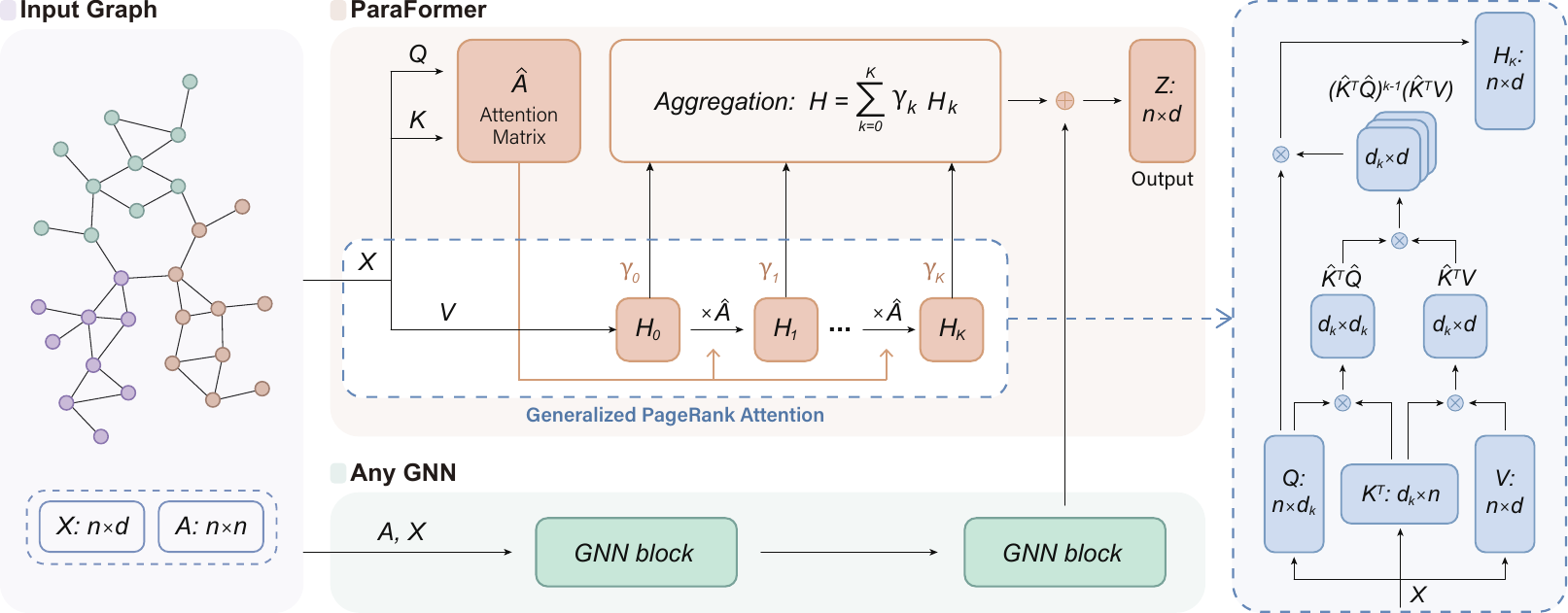}
    \caption{Overview of ParaFormer. The middle part shows the conceptual idea of ParaFormer, mainly the PageRank-enhanced attention mechanism. The right part illustrates how we implement the efficient and scalable ParaFormer, maintaining the linear complexity in detail.
    }
    \label{fig:paraformer}
\end{figure*}

\subsection{ParaFormer}
\label{sec:paraformer}
Figure~\ref{fig:paraformer} depicts the overall architecture of ParaFormer. ParaFormer contains two modules, \emph{Generalized PageRank Attention} and \emph{Auxiliary GNN Fusion}, to model the global and local information in graphs, respectively. In the following, we elucidate the design details of ParaFormer.

\subsubsection{Generalized PageRank Attention} 
\label{sec.parafomer.gpa}

In the field of graph learning, Generalized PageRank (GPR)~\cite{GPR-1, GPR-2}, enhanced from Personalized PageRank (PPR)~\cite{PageRank-98}, adaptively assigns each node a score that 
 captures its global proximity property. Precisely, the score of each node $v_i$ is initialized uniformly as $\Mat{h}^{(0)}_i = 1/n$, and the score of each node is iteratively accumulated into the current PPR score as:

 \vspace{-2ex}
 \begin{align}
    \label{eq:ppr}
     \Mat{h}^{(k)} &= (1 - \alpha) \mathbf{\Tilde{A}} \Mat{h}^{(k - 1)} + \alpha \Mat{h}^{(k-1)}, \\
     \label{eq:gpr}
     \Mat{h} &= \sum_{k = 0}^{\infty} \gamma_k \Mat{h}^{(k)} = \sum_{k=0}^{\infty} \gamma_k \mathbf{\Tilde{A}}^k \Mat{h}^{(0)},
 \end{align}
 where $\mathbf{\Tilde{A}} \in [0, 1]^{n \times n}$ is the transition probability of PPR and $\alpha \in (0, 1)$ is the damping factor. For GPR score $\Mat{h}$ in \cref{eq:ppr}, $\gamma_k \in \mathbb{R}, k = 0, 1, 2, \cdots$, referred to as the GPR weights, can be either fixed or learnable, specifying the importance of the information from length-$k$ paths. This length is usually truncated to a fixed hyper-parameter $K$ in practical computation. 
 
 By conceptualizing the attention matrix $\mathbf{\hat{A}}$ as a weighted graph, it is possible to analogize the PageRank over the attention matrix $\mathbf{\hat{A}}$. Motivated by this, we introduce a novel deep attention mechanism called Generalized PageRank Attention (GPA). 

Formally, analogous to how GPR propagates and accumulates PPR by the transition probability matrix, GPA propagates and accumulates the value vectors by a standard attention matrix $\mathbf{\hat{A}}$ as:

\begin{align}
\label{eq:gpa}
    \mathbf{Z} = \sum_{k = 0}^{K} \gamma_{k} \mathbf{\hat{A}}^k \mathbf{V} = \sum_{k = 0}^{K} \gamma_{k} \left(\text{Softmax}(\mathbf{Q}\mathbf{K}^T)\right)^k \mathbf{V}.
\end{align}
Here, $\{\gamma_k \in \mathbb{R}\mid k = 0, 1, 2, \cdots, K\}$ is a set of learnable weights of GPA, and $\mathbf{Q}$, $\mathbf{K}$ and $\mathbf{V}$ are the query, key and value matrices derived from Equation~\ref{eq:attention:final}, respectively. $\mathbf{Z} \in \mathbb{R}^{n \times d}$ is the output node representation. 
Notably, GPA can be seen as conducting $K$-layer attention with adaptive weights $\{\gamma_k \mid k = 0, 1, 2, \cdots, K\}$, and shared parameter matrices  $\mathbf{W}_Q$, $\mathbf{W}_K$, $\mathbf{W}_V$ of $K$ layers. In other words, without multiple parameter matrices, the GPA module is able to effectively capture the multi-hop dependency between nodes. Additionally, the GPA module can be proven to alleviate the over-smoothing and is capable of modeling the high frequency information in graphs, as detailed in Section \ref{sec.theory}.

\subsubsection{Auxiliary GNN Fusion}
To incorporate the local proximity information from the adjacency matrix $\mathbf{A}$, following the prior work \cite{SGFormer23}, we use a lightweight auxiliary GNN module to fuse the local proximity information with the output of the GPA module. We introduce a weight hyper-parameter $\beta$ to balance the contributions of local and global information. Therefore, the final node embedding $\mathbf{\hat{Z}} \in \mathbb{R}^{n\times d}$ is:
\begin{equation}
    \label{eq:integrate}
    \mathbf{\hat{Z}} = (1 - \beta) \mathbf{Z} + \beta \text{GNN}(\mathbf{H}, \mathbf{A}).
\end{equation}
Notably, there are many approaches integrating structural information into Transformers \cite{graph-transformer-survey-architecture-22}. ParaFormer is compatible with these approaches, enabling enhanced performance through seamless integration.

\subsubsection{Training Objective}

The output embedding  $\mathbf{\hat{Z}}$ will be adapted to different graph tasks with different training objectives.  

\paragraph{\textbf{The node classification task.}} We use an MLP $\phi_{\theta}: \mathbb{R}^{d} \rightarrow \mathbb{R}^{c}$, followed by a Softmax function to compute the probability of each class as:
\begin{align}
\label{eq:classification:mlp} 
\mathbf{\hat{P}} = \text{Softmax}(\phi_\theta(\mathbf{\hat{Z}})).
\end{align}
Given the ground truth of class labels for a node $v_i$, denoted as $\mathbf{Y_i} \in \mathbb{R}^{c}$, the model is trained by minimizing the cross-entropy loss for each node in a training set $\mathcal{D}$.
\begin{align}
    \mathcal{L} = -\frac{1}{|\mathcal{D}|}\sum_{v_i \in \mathcal{D}}  \mathbf{Y}_i \log \mathbf{\hat{P}}_i.
\end{align}

\paragraph{\textbf{The graph classification task. }} We first use a pooling function $\mu(\cdot): \mathbb{R}^{n \times d} \to \mathbb{R}^{1 \times d}$ to produce the graph-level representation $\mathbf{g} = \mu(\mathbf{\hat{Z}})$. Similar to the node classification task, given the graph representation $\mathbf{g}$,  we use a Softmax function to compute the probability of each class and the cross-entropy loss as the optimization objective. 
 
\subsection{Scalability Optimization of ParaFormer}

As discussed in Section~\ref{sec.parafomer.gpa}, the computation of GPA needs computing the power of the attention matrix with $n \times n$ dimension, leading to $\mathcal{O}(Kn^3)$ time complexity and $\mathcal{O}(n^2)$ space complexity. 
This high complexity is prohibitive for scaling to large graphs. 
Therefore, it is crucial to optimize the computation cost of GPA to enhance the scalability of ParaFormer. 

\begin{algorithm}[t]
\small
\SetAlgoLined
\SetKwRepeat{Do}{do}{while}
\KwIn{Input node feature $\mathbf{X}$, hyperparameter $K$, learnable weights $\{\gamma_0, \cdots, \gamma_{K}\}$, $\mathbf{W}_Q$, $\mathbf{W}_K$, $\mathbf{W}_V$
}
\KwOut{$\mathbf{Z}$}
$\mathbf{Q} = \mathbf{X} \mathbf{W_Q}$, $\mathbf{K} = \mathbf{X} \mathbf{W_K}$, $\mathbf{V} = \mathbf{X} \mathbf{W_V}$\\
$\mathbf{\hat{Q}} = \text{Softmax}(\mathbf{Q})$, $\mathbf{\hat{K}} = \text{Softmax}(\mathbf{K})$\\
$\mathbf{Z} = \gamma_0 \mathbf{V}$, 
$\mathbf{M} = \mathbf{\hat{K}}^{T}\mathbf{V}$\\
\For{$k \leftarrow 1\ \textbf{to}\ K$}{
    $\mathbf{Z} = \mathbf{Z} + \gamma_k \mathbf{\hat{Q}} \mathbf{M}$\\
    $\mathbf{M} = (\mathbf{\hat{K}}^{T} \mathbf{\hat{Q}}) \mathbf{M}$
    }
\caption{Forward Pass of the Scalable GPA}
\label{alg:scalGPA}
\end{algorithm}

\subsubsection{The Scalable GPA}
It is observed that the standard attention matrix can be approximated by a direct inner product of $\mathbf{\hat{Q}}$ and $\mathbf{\hat{K}}$ which are the normalization of original query matrix $\mathbf{Q}$ and key matrix $\mathbf{K}$ respectively~\cite{EfficientAttentionSoftmax-21}.

\begin{align}
   \label{eq:attention:appro}
    \mathbf{\hat{A}} &= \text{Softmax}(\mathbf{Q} \mathbf{K}^T)  \approx \mathbf{\hat{Q}} \mathbf{\hat{K}}^T ,\\
    \mathbf{\hat{Q}} &= \text{Softmax}_i(\mathbf{Q}), \mathbf{\hat{K}} = \text{Softmax}_j(\mathbf{K}).
\end{align}
This approximation can be regarded as using kernel method \cite{RFM, Performer-21} to approximate the attention matrix. The random feature transformation $\varphi: \mathbb R^d \to \mathbb R^{d'}$ can approximate the non-negative kernel function: $\kappa(\mathbf a, \mathbf b) \approx \varphi(\mathbf a)^\top\varphi(\mathbf b)$, where $\kappa(\cdot, \cdot): \mathbb R^d \times \mathbb R^d \rightarrow \mathbb R^+$ is to measure the pairwise similarity, \textit{e.g.,} the attention matrix in Transformer. Here, the transformation function $\varphi$ is the Softmax function \cite{EfficientAttentionSoftmax-21}. 
Based on the  approximation in Equation \ref{eq:attention:appro},  the power of the attention matrix can be reformulated as:

\begin{equation}
\begin{split}
    \label{our_attention}
    \mathbf{\hat{A}}^k \mathbf{V} 
    = (\mathbf{\hat{Q}}\mathbf{\hat{K}}^T)^k \mathbf{V}
    = \mathbf{\hat{Q}} (\mathbf{\hat{K}}^T \mathbf{\hat{Q}})^{k-1} (\mathbf{\hat{K}}^T \mathbf{V}).
\end{split}
\end{equation}
By applying the association rule of matrix multiplication, in Equation \ref{our_attention}, we have $\mathbf{\hat{K}}^T \mathbf{\hat{Q}} \in \mathbb{R}^{d \times d}$ and $(\mathbf{\hat{K}}^T \mathbf{V}) \in \mathbb{R}^{d \times d}$, which can be precomputed before the power iteration. In addition, in the $K$ iterations, the intermediate result  $(\mathbf{\hat{K}}^T \mathbf{\hat{Q}})^{k-1} (\mathbf{\hat{K}}^T \mathbf{V})$ in $k$-th iteration can be cached and reused in $k+1$-th iteration.  Algorithm~\ref{alg:scalGPA} presents the overall process of computing the scalable GPA, where the time complexity and space complexity are reduced to $\mathcal{O}(Kn)$ and $\mathcal{O}(nd)$, respectively.

\subsubsection{Complexity Analysis}

Suppose a graph has $n$ nodes and $m$ edges.
The complexity of computing GPA is $\mathcal{O}(Kn)$.
By default, we use a two-layer GCN for the auxiliary fusion, yielding a time complexity of $\mathcal{O}(m)$. 
In total, the overall time complexity of ParaFormer is $\mathcal{O}(Kn + m)$. 
\begin{table*}[htbp]

\setlength{\tabcolsep}{6mm}
  \centering
  \caption{The node classification results on homophilic and heterophilic graphs, with the mean accuracy and standard deviation over five runs. The highest accuracy is in  \textbf{bold} and the second highest accuracy is underlined. OOM indicates out-of-memory. }
    \resizebox{\textwidth}{!}
    {\begin{tabular}{llll|llll}
    \toprule
    \textbf{Method} & \textbf{Cora} & \textbf{CiteSeer} & \textbf{PubMed} & \textbf{Film} & \textbf{Squirrel} & \textbf{Chameleon} & \textbf{Deezer} \\
    \midrule
    MLP   & 75.7 ± 2.0 & 74.0 ± 2.0 & 87.2 ± 0.4 & 36.5 ± 0.7 & 36.6 ± 1.8 & 36.7 ± 4.7 & 66.6 ± 0.7 \\
    GCN   & 87.1 ± 1.0 & 76.5 ± 1.4 & 88.4 ± 0.5 & 30.1 ± 0.2 & 38.6 ± 1.8 & 41.3 ± 3.0 & 62.7 ± 0.7 \\
    GAT   & 88.0 ± 0.8 & 76.6 ± 1.2 & 86.3 ± 0.5 & 29.8 ± 0.6 & 35.6 ± 2.1 & 39.2 ± 3.1 & 61.7 ± 0.8 \\
    SGC   & 86.6 ± 0.3 & 76.2 ± 0.3 & 83.5 ± 0.1 & 27.0 ± 0.9 & 39.3 ± 2.3 & 39.0 ± 3.3 & 62.3 ± 0.4 \\
    JKNet & 87.0 ± 0.3 & 77.7 ± 0.4 & 87.4 ± 0.1 & 30.8 ± 0.7 & 39.4 ± 1.6 & 39.4 ± 3.8 & 61.5 ± 0.4 \\
    APPNP & 88.0 ± 0.2 & 79.3 ± 0.4 & 87.0 ± 0.2 & 31.3 ± 1.5 & 35.3 ± 1.9 & 38.4 ± 3.5 & 66.1 ± 0.6 \\
    GPRGNN & 88.5 ± 1.0 & 77.1 ± 1.8 & 87.6 ± 0.4 & 34.6 ± 1.2 & 39.0 ± 2.0 & 39.9 ± 3.3 & 66.9 ± 0.5 \\
    H$_{2}$\text{GCN} & 87.9 ± 1.2 & 77.1 ± 1.6 & 89.5 ± 0.4 & 34.4 ± 1.7 & 35.1 ± 1.2 & 38.1 ± 4.0 & 66.2 ± 0.8 \\
    SIGN  & 86.5 ± 0.6 & 76.6 ± 0.6 & 89.4 ± 0.1 & 36.5 ± 1.0 & 40.7 ± 2.5 & 41.7 ± 2.2 & 66.3 ± 0.3 \\
    CPGNN & 87.2 ± 1.1 & 75.5 ± 1.8 & 89.1 ± 0.6 & 34.5 ± 0.7 & 38.9 ± 1.2 & 40.8 ± 2.0 & 65.8 ± 0.3 \\
    GLoGNN & 88.3 ± 1.1 & 77.4 ± 1.7 & 89.6 ± 0.4 & 36.4 ± 1.6 & 35.7 ± 1.3 & 40.2 ± 3.9 & 65.8 ± 0.8 \\
    \midrule
    Graphormer & 72.9 ± 0.8 & 66.2 ± 0.8 & 82.8 ± 0.2 & 33.9 ± 1.4  & 39.9 ± 2.4  & 41.3 ± 2.8  & OOM  \\
    GraphTrans & 86.7 ± 0.6 & 72.1 ± 0.8 & 87.8 ± 0.6 & 32.1 ± 0.8 & 40.6 ± 2.4 & 42.2 ± 2.9 & OOM \\
    NodeFormer & \underline{88.7 ± 0.2} & 76.1 ± 0.2 & 89.3 ± 0.1 & 36.9 ± 1.0  & 38.5 ± 1.5  & 34.7 ± 4.1  & 66.4 ± 0.7  \\
    SGFormer & 88.1 ± 0.7 & 77.4 ± 0.8 & 89.7 ± 0.2 & 37.2 ± 1.6 & 41.8 ± 2.2  & \underline{44.9 ± 3.9}  & 67.1 ± 1.1  \\
    NAGphormer & 88.2 ± 0.5 & \underline{80.1 ± 0.2} & 89.7 ± 0.2 & 36.4 ± 1.3  & 39.8 ± 1.7  & 42.4 ± 5.7  & 63.5 ± 0.6  \\
    CobFormer-G & 88.4 ± 0.4 & 76.4 ± 0.4 & 87.3 ± 0.2 & 30.9 ± 1.7 & 35.4 ± 1.1 & 39.9 ± 2.4 & 64.2 ± 0.7 \\
    CobFormer-T & 88.2  ± 1.2 & 76.5 ± 0.7 & 89.0 ± 0.4 & 34.9 ± 3.3 & 33.1 ± 1.6 & 39.7 ± 4.6 & 66.6 ± 0.7 \\
    Polynormer & 87.9 ± 1.2 & 77.4 ± 0.7 & \textbf{90.3 ± 0.1} & 36.8 ± 1.5 & 40.9 ± 2.0  & 41.8 ± 3.5  & \underline{67.6 ± 0.4} \\
    \midrule
    \textbf{ParaFormer$_\text{GCN}$} & \underline{88.7 ± 0.5} & 78.6 ± 0.1 & \textbf{90.3 ± 0.2} & \textbf{38.0 ± 1.2} & \textbf{43.0 ± 2.8} & \textbf{45.9 ± 3.3} & 67.3 ± 0.9 \\
    \textbf{ParaFormer$_\text{GPRGNN}$} & \textbf{89.4 ± 0.9} & \textbf{80.5 ± 0.6} & \underline{90.0 ± 0.1} & \underline{37.3 ± 1.0} & \underline{42.1 ± 2.1} & 42.1 ± 4.5 & \textbf{67.7 ± 0.5} \\
    \bottomrule
    \end{tabular}
    }
  \label{tab:medium-graph-results}
  
\end{table*}
\subsection{Theoretical Analysis}\label{sec.theory}
This section theoretically analyzes the frequency property and over-smoothing of ParaFormer. 

\subsubsection{ParaFormer is an adaptive-pass filter.}
While the vanilla Transformer serves as a pure low-pass filter~\cite{attention-oversmoothing-vit}, with the help of the adaptive weights, ParaFormer can flexibly process graphs with both low- and high-frequency components.

\begin{theorem}
Given $\gamma_1 > 0 , \gamma_k=(-a)^k/2, a\in(0,\frac{1}{n}), k>1$, the attention matrix $\hat{\Mat{A}}$ in ParaFormer is a polynomial graph filter that can function as both a low-pass and high-pass graph filter.
    
    \label{proposition:filter}
\end{theorem}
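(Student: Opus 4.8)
The plan is to treat the Generalized PageRank Attention as a single polynomial graph filter and read off its scalar frequency response. First I would note that, by \cref{eq:gpa}, the GPA output is $\mathbf{Z} = g(\hat{\mathbf{A}})\mathbf{V}$ with $g(x) = \sum_{k=0}^{K}\gamma_k x^k$, so that in the spectral basis of $\hat{\mathbf{A}}$ the filter acts on each eigenmode with eigenvalue $\mu$ by the scalar gain $g(\mu)$. Because $\hat{\mathbf{A}}$ is row-stochastic with strictly positive entries, its Perron eigenvalue $\mu=1$ carries the constant, smoothest (lowest-frequency) mode, while the modes at the opposite end of the spectrum carry the high-frequency content; hence ``low-pass'' means $g$ emphasizes $\mu$ near $1$ and ``high-pass'' means $g$ emphasizes the small/negative end. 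This reduces the theorem to analyzing the scalar map $\mu\mapsto g(\mu)$ on the spectrum.

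Second I would substitute the prescribed weights and sum the alternating tail in closed form. Since $a\in(0,\tfrac1n)$ and every eigenvalue obeys $|\mu|\le 1$, we have $|a\mu|<1$, so the geometric series converges and
\begin{align}
g(\mu) = \gamma_0 + \gamma_1\mu + \frac{1}{2}\sum_{k\ge 2}(-a\mu)^k = \gamma_0 + \gamma_1\mu + \frac{a^2\mu^2}{2(1+a\mu)}. \nonumber
\end{align}
A one-line polynomial division $\tfrac{a^2\mu^2}{1+a\mu}=a\mu-1+\tfrac{1}{1+a\mu}$ then gives the decomposition
\begin{align}
g(\mu) = \underbrace{\Bigl(\gamma_0-\tfrac12\Bigr)}_{\text{DC gain}} + \underbrace{\Bigl(\gamma_1+\tfrac{a}{2}\Bigr)\mu}_{\text{low-pass part}} + \underbrace{\frac{1}{2(1+a\mu)}}_{\text{high-pass part}}. \nonumber
\end{align}
(I would take the untruncated series for transparency; the finite-$K$ truncation only perturbs each term by order $a^{K+1}$ and preserves this split.)

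The third step is to identify the character of each summand. Because $\gamma_1>0$, the coefficient $\gamma_1+\tfrac a2$ is strictly positive, so the linear term is strictly increasing in $\mu$ and amplifies the low-frequency end near $\mu=1$: a genuine low-pass action. The rational term has derivative $-\tfrac{a}{2(1+a\mu)^2}<0$ (well defined since $1+a\mu>0$, guaranteed by $a<\tfrac1n$), so it is strictly decreasing in $\mu$ and amplifies the high-frequency end at the bottom of the spectrum: a genuine high-pass action. Thus the same filter carries a low-pass and a high-pass response simultaneously, and sweeping the learnable $\gamma_1$ (together with $a$) tilts the balance continuously between them, which is exactly the adaptive-pass claim. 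I would close by contrasting this with the vanilla Transformer, whose $K$-fold attention has response $g(\mu)=\mu^{K}\to 0$ for every $|\mu|<1$, annihilating all high-frequency modes as depth grows.

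The step I expect to be the main obstacle is making the spectral interpretation fully rigorous: the attention matrix is not symmetric, so its eigenvalues may be complex and the notions of ``low-'' and ``high-pass'' must be tied to the paper's Fourier-based high-frequency component $\HC{\cdot}$ rather than to a naive real ordering of $\mu$. I would handle this by carrying out the per-mode argument on the symmetrized operator (or, equivalently, by controlling the Dirichlet-energy quadratic form $\langle \mathbf{h}, (\mathbf{I}-\hat{\mathbf{A}})\mathbf{h}\rangle$ that underlies $\HC{\cdot}$), and by invoking $a<\tfrac1n$ to keep $\mathbf{I}+a\hat{\mathbf{A}}$ invertible so that the closed-form high-pass term lifts from the scalar identity back to a bona fide operator $\tfrac12(\mathbf{I}+a\hat{\mathbf{A}})^{-1}$ on the value matrix.
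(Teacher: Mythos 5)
Your route is genuinely different from the paper's. The paper argues by exhibiting two separate points in parameter space: with fully alternating weights $\gamma_k=(-a)^k$ it computes the DC projection $\frac{1}{n}\mathbf{1}\mathbf{1}^T\sum_k\gamma_k\hat{\Mat{A}}^k$ directly and argues it vanishes (high-pass regime), while the degenerate choice $\gamma_0=\gamma_1=1,\ \gamma_{k\ge 2}=0$ recovers the vanilla Transformer, already known to be low-pass; it then concludes that mixing the two weight vectors interpolates between the behaviors. You instead try to show that a \emph{single} weight setting simultaneously carries both characteristics, via the closed-form response $g(\mu)=(\gamma_0-\tfrac12)+(\gamma_1+\tfrac a2)\mu+\tfrac{1}{2(1+a\mu)}$. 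The algebra of that decomposition is correct, and it is in some ways cleaner than the paper's DC computation, but it proves a different (and, as written, weaker) thing.

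The genuine gap is in the high-pass half of your claim. Under the hypotheses $a\in(0,\tfrac1n)$ and $|\mu|\le 1$, the term $\tfrac{1}{2(1+a\mu)}$ ranges only over $\bigl[\tfrac{1}{2(1+a)},\tfrac{1}{2(1-a)}\bigr]$: its total frequency discrimination is $O(a)=O(1/n)$ and its gain at the DC mode is $\approx\tfrac12$, so it does not suppress the constant component at all --- it is a near-all-pass term with an $O(1/n)$ tilt, not "a genuine high-pass action." Meanwhile the competing term has slope $\gamma_1+\tfrac a2$, so the net derivative $g'(\mu)=\gamma_1+\tfrac a2-\tfrac{a}{2(1+a\mu)^2}$ is positive for every $\gamma_1\gtrsim a^2$; the combined filter is therefore monotonically increasing in $\mu$, i.e.\ net low-pass, for essentially all admissible $\gamma_1>0$. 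Sweeping $\gamma_1$ cannot "tilt the balance" to high-pass because the high-pass summand can never dominate. To get the high-pass regime you need what the paper does: let the alternating signs act from $k=0$ so that the geometric weights cancel against the DC direction (equivalently, arrange $g(1)\approx 0$ while $g$ stays bounded away from zero on the rest of the spectrum); your parameterization, with $\gamma_0$ free and $\gamma_1>0$, re-injects the DC mode with $O(1)$ gain. A secondary, self-acknowledged issue is that your notion of frequency lives in the eigenbasis of the non-normal matrix $\hat{\Mat{A}}$, whereas the paper's $\DC{\cdot}/\HC{\cdot}$ is the fixed DFT projection onto $\mathbf{1}$ and its complement; these coincide only for the Perron mode, and the symmetrization/Dirichlet-energy repair you sketch is not carried out. (Also, the constant term $\gamma_0-\tfrac12$ corresponds to the identity operator and is all-pass, not a "DC gain.")
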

The detailed proof is in \href{https://anonymous.4open.science/r/ParaFormer-WSDM/}{Appendix \ref{appendix:proof1}}. As proven in~\cite{attention-oversmoothing-vit}, vanilla Transformers will suppress the high frequency components, leading to Transformers' inferior performance. On the contrary, when the value of $\gamma_k$ can be negative, ParaFormer can also process information with high-frequencies. As justified in Section~\ref{sec.impactofk}, the conditions on $\gamma$ is relatively easy to satisfy in real optimizations, leading the better performance.

\subsubsection{Over-smoothing Analysis. }Regarding the over-smoothing properties of ParaFormer, we have the following results.

\begin{theorem}
     \label{theorem:oversmoothingcompare}
     Given a Generalized PageRank Attention $\text{GPA}$ with proper initialization of $\{\gamma_k\}$ and a self-attention module $\text{SA}$, we have:
     $\lambda_{\text{GPA}} \le \lambda_{\text{SA}}$.
\end{theorem}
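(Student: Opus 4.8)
The plan is to recast both smoothing rates as (reciprocals of) the worst‑case high‑frequency gain of their respective aggregation operators, and then to show that the generalized PageRank form has a strictly larger such gain; by Definition~\ref{def:sa_rate} a larger gain is exactly a smaller $\lambda$. First I would fix the decomposition $\mathbf{H} = \DC{\mathbf{H}} + \HC{\mathbf{H}}$ with the linear projector $\HC{\mathbf{X}} = (\mathbf{I} - \tfrac{1}{n}\mathbf{1}\mathbf{1}^T)\mathbf{X}$, and import the single‑step contraction established for self‑attention in~\cite{attention-oversmoothing-vit}: there is a constant $c_0 = \sqrt{(e^{2\alpha}+n-1)/(n e^{2\alpha})} < 1$ (strictly, since $\alpha>0$ and $n>1$) with $\lVert \HC{\hat{\mathbf{A}}\mathbf{X}} \rVert_F \le c_0 \lVert \HC{\mathbf{X}} \rVert_F$ for every $\mathbf{X}$. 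Combining this with $\lVert \HC{\mathbf{X}\mathbf{W}_V} \rVert_F = \lVert \HC{\mathbf{X}}\mathbf{W}_V \rVert_F \le \lVert \mathbf{W}_V \rVert_2 \lVert \HC{\mathbf{X}} \rVert_F$ (right multiplication commutes with the left projector), the pure self‑attention output $\hat{\mathbf{H}} = \hat{\mathbf{A}}\mathbf{V}$ has maximal high‑frequency gain $c_0\lVert\mathbf{W}_V\rVert_2$, whose reciprocal recovers the reported $\lambda_{\text{SA}}$.

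Second, I would expand the GPA output $\mathbf{Z} = \sum_{k=0}^{K}\gamma_k \hat{\mathbf{A}}^k\mathbf{V}$ through the linearity of the projector, giving $\HC{\mathbf{Z}} = \sum_{k=0}^K \gamma_k \HC{\hat{\mathbf{A}}^k\mathbf{V}}$. Iterating the single‑step bound on $\hat{\mathbf{A}}^k\mathbf{V} = \hat{\mathbf{A}}(\hat{\mathbf{A}}^{k-1}\mathbf{V})$ yields the per‑term estimate $\lVert\HC{\hat{\mathbf{A}}^k\mathbf{V}}\rVert_F \le c_0^k\lVert\mathbf{W}_V\rVert_2\lVert\HC{\mathbf{H}}\rVert_F$ for $k\ge1$, whereas the $k=0$ term is a pure pass‑through, $\HC{\gamma_0\mathbf{V}} = \gamma_0\,\HC{\mathbf{H}}\mathbf{W}_V$, whose high‑frequency content is not contracted at all. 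The crucial structural observation is that this zeroth‑order term behaves as a residual that lets high frequencies bypass the low‑pass attention operator, which is precisely the mechanism by which deeper stacking is resisted.

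Third, using a reverse triangle inequality I would lower‑bound the GPA gain along an input whose high‑frequency rows align with the top right singular direction of $\mathbf{W}_V$:
\begin{align}
\sup_{\mathbf{H}}\frac{\lVert\HC{\mathbf{Z}}\rVert_F}{\lVert\HC{\mathbf{H}}\rVert_F} \ge \Big(|\gamma_0| - \sum_{k=1}^{K}|\gamma_k|\,c_0^k\Big)\lVert\mathbf{W}_V\rVert_2.
\end{align}
The "proper initialization" of $\{\gamma_k\}$ is then made precise as the condition $|\gamma_0| - \sum_{k\ge1}|\gamma_k|c_0^k \ge c_0$ (for instance $\gamma_0=1$ with sufficiently small higher‑order weights, consistent with the regime of Theorem~\ref{proposition:filter}); since $c_0<1$ this is easy to satisfy and forces the right‑hand side to be at least $c_0\lVert\mathbf{W}_V\rVert_2$. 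Hence $\sup_{\text{GPA}} \ge c_0\lVert\mathbf{W}_V\rVert_2 = \sup_{\text{SA}}$, and taking reciprocals in Definition~\ref{def:sa_rate} gives $\lambda_{\text{GPA}} = (\sup_{\text{GPA}})^{-1} \le (\sup_{\text{SA}})^{-1} = \lambda_{\text{SA}}$.

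The main obstacle I anticipate is the third step: converting the heuristic "proper initialization" into a checkable condition and guaranteeing that the higher‑order contributions $\sum_{k\ge1}\gamma_k\HC{\hat{\mathbf{A}}^k\mathbf{V}}$ do not cancel the residual term for the maximizing input. Because $\hat{\mathbf{A}}$ is row‑stochastic but not symmetric, its powers need not share singular directions, so the per‑term estimate is only an upper bound and the reverse triangle inequality may be loose; constructing an explicit witnessing input, or alternatively invoking continuity to work in a neighborhood of $\gamma_0\neq 0$, $\gamma_{k\ge1}\approx 0$, is where the genuine work lies. Verifying the iterated contraction for the non‑symmetric operator and its interaction with $\mathbf{W}_V$ across distinct powers are the remaining technical points.
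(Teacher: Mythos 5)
Your argument is essentially sound, but it takes a genuinely different route from the paper's. The paper does not work with the high-frequency Frobenius gain at all: it starts from the closed form $\lambda_{\text{SA}} = \sqrt{\lVert \text{Softmax}(\mathbf{P})\rVert_1}\,\lVert \mathbf{W}_V\rVert_2$, sets $c = \lVert \text{Softmax}(\mathbf{P})\rVert_1 > 1$, fixes the concrete initialization $\gamma_0 = \tfrac{c-1}{2}$, $\gamma_1 = -\tfrac{1}{c}$, $\gamma_k = 0$ for $k\ge 2$ (note the \emph{negative} first-order weight, i.e.\ a sharpening operator $\gamma_0\mathbf{I}+\gamma_1\hat{\mathbf{A}}$), and shows by a columnwise triangle inequality that the $\ell_1$ operator norm of this combination is at most $\tfrac{c+1}{2} < c$, whence $\lambda_{\text{GPA}} < \lambda_{\text{SA}}$. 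You instead argue directly from Definition~\ref{def:sa_rate}, reading $\lambda$ as the reciprocal of the worst-case high-frequency gain, importing the contraction constant $c_0<1$ for a single attention step, and choosing $\gamma_0$ dominant so that the identity (residual) term preserves enough high-frequency energy to beat the self-attention upper bound. Each approach buys something: the paper's is a pure norm computation with no witness construction and no tightness concerns, but it leans on the unproven identity for $\lambda_{\text{SA}}$ and on a specific negative weight; yours stays faithful to the definition and isolates the residual mechanism, and in fact only needs $c_0\lVert\mathbf{W}_V\rVert_2$ as an \emph{upper} bound on the SA gain (the equality you assert is unnecessary), but it does require exhibiting a witnessing input for the GPA lower bound and handling the dependence of $c_0$ on the input through $\alpha$ — precisely the obstacles you flag. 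Since the theorem only asserts existence of a "proper initialization," both instantiations are legitimate; neither is more general than the other.
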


As shown in Theorem~\ref{theorem:oversmoothingcompare}, even with $K$-layers, the GPA module exhibits slower smoothing rate than that of single-layer self-attention. This indicates that ParaFormer not only effectively captures multi-hop dependencies but also mitigates the impact of over-smoothing, thereby achieving superior performance.

Theorem~\ref{theorem:oversmoothingcompare} analyzes the situation when the weight of each layer is fixed.  However, with adaptive weights,  ParaFormer can automatically regulate over-smoothing under an appropriate optimization objective. Specifically, we have:
\begin{theorem}
    \label{proposition:oversmoothing}
    Suppose $K$ is sufficiently large, for PageRank-enhanced attention, $\hat{\mathbf{A}}^{k}\mathbf{V}, \forall k\geq k'$, will be over-smoothed. When over-smoothing happens in ParaFormer, the learnable $\gamma_k$ will converge to $0$ with an appropriate learning rate.
\end{theorem}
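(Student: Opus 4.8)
The plan is to establish the two claims in turn: first that the high powers $\hat{\Mat{A}}^k\Mat{V}$ must become over-smoothed, and then that gradient training drives the corresponding coefficients $\gamma_k$ to zero. For the over-smoothing claim I would use that $\hat{\Mat{A}} = \text{Softmax}(\Mat{Q}\Mat{K}^T)$ is row-stochastic with strictly positive entries (Softmax outputs are always positive), hence primitive. By the Perron--Frobenius theorem its largest eigenvalue is $1$, simple, with right eigenvector $\Mat{1}$ and left eigenvector the stationary distribution $\boldsymbol{\pi}$, while all other eigenvalues satisfy $|\lambda_i| \le |\lambda_2| < 1$. Therefore $\hat{\Mat{A}}^k \to \Mat{1}\boldsymbol{\pi}^T$ geometrically and $\hat{\Mat{A}}^k\Mat{V} \to \Mat{1}(\boldsymbol{\pi}^T\Mat{V})$ becomes a constant-row matrix with identical node representations. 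Quantitatively, the high-frequency energy $\lVert\HC{\hat{\Mat{A}}^k\Mat{V}}\rVert_F$ is carried entirely by the non-Perron eigenspaces and decays at rate $|\lambda_2|^k$; taking $k'$ with $|\lambda_2|^{k'}$ below the over-smoothing threshold yields the index beyond which every term is over-smoothed.

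For the coefficient dynamics, write $\Mat{Z} = \sum_k \gamma_k\Mat{B}_k$ with $\Mat{B}_k = \hat{\Mat{A}}^k\Mat{V}$, so that $\partial\mathcal{L}/\partial\gamma_k = \langle \Mat{G}, \Mat{B}_k\rangle_F$ where $\Mat{G} = \partial\mathcal{L}/\partial\Mat{Z}$. Once over-smoothing sets in, the first part gives $\Mat{B}_k \approx \Mat{1}\Mat{c}^T$ with $\Mat{c} = \Mat{V}^T\boldsymbol{\pi}$ for all $k \ge k'$, so these gradients collapse to a single common value $\Mat{c}^T(\Mat{G}^T\Mat{1})$, namely the projection of $\Mat{G}$ onto the constant (zero-frequency) direction. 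The discriminative signal that the classification loss can still reduce lives in the complementary high-frequency subspace $\HC{\cdot}$, which is orthogonal to $\Mat{1}\Mat{c}^T$; hence the data-fitting part of this gradient is a uniform shift of all node embeddings that the bias of $\phi_\theta$ absorbs, leaving no force that could grow $\gamma_k$. The surviving dynamics then reduce to the decay term, and a gradient-flow estimate of the form $\dot\gamma_k = -\mu\gamma_k + o(1)$ gives $\gamma_k \to 0$ provided the step size is small enough for the discretized iteration to stay stable, which is the role of the ``appropriate learning rate.''

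The step I expect to be the main obstacle is this second part: making rigorous that the data-fitting gradient along the constant direction genuinely vanishes (or is negligible) under over-smoothing, while controlling the coupling through the MLP $\phi_\theta$ and the auxiliary term $\beta\,\text{GNN}(\Mat{H},\Mat{A})$ inside $\hat{\Mat{Z}}$. The cleanest route is to observe that, up to $O(|\lambda_2|^{k'})$ error, the over-smoothed terms enter the model only through the scalar $\sum_{k \ge k'}\gamma_k$ multiplying one fixed constant-row matrix, so the loss is invariant to how that sum is split among the $K - k' + 1$ redundant coefficients; a minimum-norm or weight-decayed gradient flow then distributes the mass evenly and forces each individual $\gamma_k \to 0$ as $K$ grows. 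Turning the ``$\approx$'' into explicit $|\lambda_2|^k$ bounds and checking the learning-rate condition against the Lipschitz constant of $\mathcal{L}$ in $\gamma$ would finish the proof.
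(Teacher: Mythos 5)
Your first part is essentially the paper's argument: $\hat{\mathbf{A}}$ is row-stochastic with strictly positive entries, hence the transition matrix of an irreducible (primitive) chain, so $\hat{\mathbf{A}}^k \to \mathbf{1}\boldsymbol{\pi}^T$ and $\hat{\mathbf{A}}^k\mathbf{V}$ collapses to rank one; your $|\lambda_2|^k$ quantification is a harmless refinement of the paper's mixing-rate bound. The gap is in the second part, and it is exactly the step you flagged as the main obstacle. Your route needs two things the theorem does not supply: (i) that the common gradient value $\mathbf{c}^T(\mathbf{G}^T\mathbf{1})$ vanishes, and (ii) a restoring term $-\mu\gamma_k$ to actually drive each coefficient to zero. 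Neither holds. A uniform shift $\gamma_k\mathbf{1}\mathbf{c}^T$ of all node embeddings is not absorbed by the bias of $\phi_\theta$: the MLP is nonlinear, and even for a linear head a shift by $\mathbf{c}$ in feature space does not become a per-node constant across class logits, so the softmax is not invariant to it and the data-fitting gradient along the constant direction is generically nonzero. If that common gradient equals some nonzero $g$, plain gradient descent translates all $\gamma_k$, $k\ge k'$, by $-g$ per step --- in a common direction, not toward $0$. And your own observation that the loss only sees $\sum_{k\ge k'}\gamma_k$ cuts against you: it means each individual $\gamma_k$ is unconstrained by the loss, so without an explicit weight-decay or minimum-norm assumption (which the theorem does not make) there is no force sending each one to zero.

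The paper closes this gap with a sign argument rather than a vanishing argument. Writing $\boldsymbol{\beta}^T = \boldsymbol{\pi}^T\mathbf{H}_0$, the gradient is $\partial L/\partial\gamma_k = \sum_{i\in\mathcal{T}}\eta(\hat{\mathbf{P}}_{i:}-\mathbf{Y}_{i:})\boldsymbol{\beta}^T + o_k(1)$. Under the paper's definition of over-smoothing ($\mathbf{Z} = \pm c_0\mathbf{1}\boldsymbol{\beta}^T$ according to the sign of $\gamma_k$) and a tempered-softmax-to-argmax lemma, this evaluates to $\sum_{i}\eta\bigl(\max_{j}\boldsymbol{\beta}_j - \boldsymbol{\beta}_{\mathbf{1}[\mathbf{Y}_{i:}]}\bigr)$ when $\gamma_k>0$, which is nonnegative, and to $\sum_{i}\eta\bigl(\min_{j}\boldsymbol{\beta}_j - \boldsymbol{\beta}_{\mathbf{1}[\mathbf{Y}_{i:}]}\bigr)$ when $\gamma_k<0$, which is nonpositive. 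So the gradient has the same sign as $\gamma_k$, and gradient descent with a small enough step monotonically shrinks $|\gamma_k|$. That monotone contraction is the idea your proposal is missing; to repair your version you should compute the sign of $\langle\mathbf{G},\mathbf{1}\mathbf{c}^T\rangle_F$ under the over-smoothed (uniform, mostly wrong) prediction rather than try to show it is zero.
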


The proof can be found in \href{https://anonymous.4open.science/r/ParaFormer-WSDM/}{Appendix \ref{appendix:proof3}}. Intuitively, Theorem \ref{proposition:oversmoothing} can be understood as, when the over-smoothed $\mathbf{H}_{k}$ fails to classify the nodes, it will increase the training loss. Then our learnable $\gamma_k$ will reduce its magnitude to reduce the training loss by gradient descent, which also alleviates the over-smoothing effect. 
Since $|\gamma_{k}|$ will converge to $0$ with $\mathbf{H}_{k}$ having been over-smoothing, the impact of over-smoothed deep layers will diminish. Thus, the $\mathbf{H}_{k}$ carries useful information to distinguish node representations will dominate the $\mathbf{H}$. Additionally, we find empirical evidence validating the efficacy of ParaFormer in overcoming over-smoothing our experiments in Section \ref{sec.antooversm}.

\section{Experiments}
\label{sec:experiments}

In this section, we report
our comprehensive experiments in the following facets:
\ding{172} Compare the effectiveness of ParaFormer with SOTA for node classification on homophilic and
heterophilic graphs.
\ding{173} Test the effectiveness of ParaFormer for graph-level classification task.
\ding{174} Evaluate the scalability of ParaFormer on graphs with millions of nodes. 
\ding{175} Conduct ablation studies on the core components of ParaFormer. 
\ding{176} Investigate the capability of ParaFormer in  alleviating over-smoothing.

\paragraph{\textbf{Implementation Details.}} All experiments involving ParaFormer in this study are conducted on Nvidia HGX H20 Enterprise 96GB. The primary software libraries utilized are Python 3.10, Pytorch 2.0.1 and PyG 2.5.2. 

The main features of the datasets we utilize in this work are listed in Appendix~\ref{appendix.dataset}. For datasets with predefined splits provided by prior work, these splits are adopted, and an equivalent number of experimental repetitions are performed. For all other datasets, five experimental runs are conducted, and the results are reported as mean and variance. Following the conventions of prior work, accuracy is reported as the evaluation metric for all experiments.

\subsection{Node Classification}
\label{sec: Medium-sized Graph}

\paragraph{\textbf{Setup.}} We use seven node classification benchmarks, including two categories, homophilic (Cora, Citeseer and PubMed~\cite{Sen08collectiveclassification}) and heterophilic (Film, Squirrel, Chameleon and Deezer~\cite{rozemberczki2021multiscale}),  based on whether neighboring nodes tend to share the same or distinct labels, to evaluate the performance of ParaFormer. We split the data of homophilic graphs following the traditional fully-supervised setting~\cite{he2021bernnet}. For heterophilic graphs, we follow the split from~\cite{newbench} (Film and Squirrel) and~\cite{makingprogress} (Squirrel and Chameleon).  

We implement two variants of ParaFormer with two different auxiliary GNN modules: GCN~\cite{GCN17} and GPRGNN~\cite{GPRGNN21}, referred to as ParaFormer$_{\text{GCN}}$ and ParaFormer$_{\text{GPRGNN}}$ respectively. More details of the experiment can refer to \href{https://anonymous.4open.science/r/ParaFormer-WSDM/}{Appendix ~\ref{appendix:imple_details}}.

\paragraph{\textbf{Baselines. }}  To extensively validate the  effectiveness of ParaFormer, we compare it with 16 competitive baselines, including message-passing GNNs and Graph Transformers. For GNNs, we use GCN~\cite{GCN17}, GAT~\cite{GAT} and SGC~\cite{SGC-19}, JKNet~\cite{JKNet-18}, APPNP~\cite{APPNP18}, GPRGNN~\cite{GPRGNN21}, H$_2$GCN~\cite{h2gcn-20}, SIGN~\cite{sign}, CPGNN \cite{CPGNN}, and GLoGNN~\cite{GLOGNN} as baselines. For Graph Transformers, we adopt Graphormer~\cite{Graphformer21}, GraphTrans~\cite{GraphTrans21}, NodeFormer~\cite{Nodeformer22}, SGFormer~\cite{SGFormer23}, NAGphormer~\cite{chen2023nagphormer}, CobFormer~\cite{cobformer}\footnote{CobFormer-G/T denotes the task head module of CobFormer, which integrates GNNs and Transformer respectively.}, and Polynormer~\cite{deng2024polynormer} as baselines. These baselines include advanced GNNs being able to capture multi-hop relations (e.g. APPNP, SIGN), mitigate over-smoothing (e.g., JKNet, GPRGNN), encode heterophily information (e.g., H$_2$GCN, CPGNN) and capture global information (e.g., GLoGNN). Additionally, we also include recent powerful Graph Transformers while neglecting mitigating over-smoothing as our baselines.

\paragraph{\textbf{Results.}}
Table~\ref{tab:medium-graph-results} presents the mean accuracy and standard deviation results on all graph datasets for all models. From Table~\ref{tab:medium-graph-results} we can observe that: 
\ding{172} ParaFormer surpasses all baselines on all datasets, indicating its strong generalization ability. 
\ding{173} ParaFormer achieves a significant 3.4\% average relative performance gain over message-passing GNNs, indicating the benefits of global attention module in ParaFormer. 
\ding{174} ParaFormer also shows a better performance compared with the other Graph Transformers (1.9\% average relative improvement), verifying the effectiveness of the proposed PageRank attention. Notably, with the same auxiliary GNN module, ParaFormer$_{\text{GCN}}$ consistently outperforms SGFormer in all datasets. It indicates that PageRank attention mechanism can enhance the generalization capacity by effectively capturing the long-range dependency, compared with one-layer attention in SGFormer. (4) ParaFormer gains a larger relative improvement (2.03\%) on heterophilic graph  than that of homophilic graph (1.00\%), implying that ParaFormer does not rely on homophily assumption in message-passing GNNs. 

\paragraph{\textbf{Efficiency and Memory Usage}}
We record the training time and memory in inference time in Table~\ref{tab:train_time_inference}, which indicates that S-GPA greatly improves the training efficiency and save the GPU usage.
\begin{table}[t]
  \centering
  \caption{Training time each epoch and memory usage during inference}
  
    \resizebox{0.48\textwidth}{!}{\begin{tabular}{llllll}
    \toprule
          & \textbf{Cora} & \textbf{PubMed} & \textbf{Chameleon} & \textbf{Squirrel} & \textbf{Deezer} \\
    \midrule

    Time(w/ S-GPA)  & 10.2ms & 93.7ms & 11.4ms & 13.0ms & 242.1ms \\
    Time(w/o S-GPA)  & 8.7ms & 385.3ms & 9.3ms & 12.3ms & 834.8ms \\
    Memory(w/ S-GPA) & 0.59G & 0.95G & 0.51G & 0.59G & 4.35G \\
    Memory(w/o S-GPA) & 0.59G & 4.21G & 0.53G & 0.64G & 11.2G \\
    \bottomrule
    \end{tabular}}
  \label{tab:train_time_inference}
\end{table}

\subsection{Graph-level Classification}
\paragraph{\textbf{Setup.}} We further test the ability of ParaFormer in classification tasks such as image and text. Specifically, we follow the setting as in DIFFormer~\cite{wu2023difformer}, choosing 20News-Group~\cite{pedregosa2011scikit} and STL-10 as the dataset with limited label rates. Wu et al.~\cite{wu2023difformer} pre-process the dataset with K-NN graph construction. The objective of these datasets is to predict the labels of images and documents, which constitutes a graph-level classification task. 
Since these datasets lack inherent graph structures, GNN baselines construct graphs using the k-nearest neighbors method, with k set to 5. To assess the GPA module's capability in capturing global properties, ParaFormer is designed without auxiliary GNNs.

\paragraph{\textbf{Baselines.}}
We compare ParaFormer with a variety of models across different aspects, including two classic graph self-supervised learning model, LP~\cite{zhu2003semi} and  ManiReg~\cite{belkin2006manifold}, classic GNNs such as GCN~\cite{GCN17}, GAT~\cite{GAT}, a graph structural learning model, GLCN~\cite{jiang2019semi}, and a Graph Transformer, DIFFormer~\cite{wu2023difformer}.

\begin{table}[t]
    \centering
    \centering
    \caption{The results on image (STL-10) and text (20-News) classification tasks.}
    \vspace{-2ex}
    \resizebox{0.48\textwidth}{!}{\begin{tabular}{llrrlll}
    \toprule
    \multicolumn{1}{c}{\multirow{2}[4]{*}{\textbf{Dataset}}} & \multicolumn{3}{c}{\textbf{ STL-10}} & \multicolumn{3}{c}{\textbf{20News}} \\
\cmidrule{2-7}          & 100 labels & \multicolumn{1}{l}{500 labels} & \multicolumn{1}{l}{1000 labels} & 1000 labels & 2000 labels & 4000 labels \\
    \midrule
    LP    & \multicolumn{1}{r}{65.2} & 71.8  & 72.7  & \multicolumn{1}{r}{55.9} & \multicolumn{1}{r}{57.6} & \multicolumn{1}{r}{59.5} \\
    ManiReg & $66.5 \pm 1.9$ & \multicolumn{1}{l}{$72.5 \pm 0.5$} & \multicolumn{1}{l}{$74.2 \pm 0.5$} & $56.3 \pm 1.2$ & $60.0 \pm 0.8$ & $63.6 \pm 0.7$ \\
    GCN-kNN & $66.9 \pm 0.5$ & \multicolumn{1}{l}{$72.1 \pm 0.8$} & \multicolumn{1}{l}{$73.7 \pm 0.4$} & $56.1 \pm 0.6$ & $60.6 \pm 1.3$ & $64.3 \pm 1.0$ \\
    GAT-kNN & $66.5 \pm 0.8$ & \multicolumn{1}{l}{$72.0 \pm 0.8$} & \multicolumn{1}{l}{$73.9 \pm 0.6$} & $55.2 \pm 0.8$ & $59.1 \pm 2.2$ & $62.9 \pm 0.7$ \\
    DenseGAT & OOM   & \multicolumn{1}{l}{OOM} & \multicolumn{1}{l}{OOM} & $54.6 \pm 0.2$ & $59.3 \pm 1.4$ & $62.4 \pm 1.0$ \\
    GLCN  & $66.4 \pm 0.8$ & \multicolumn{1}{l}{$72.4 \pm 1.3$} & \multicolumn{1}{l}{$74.3 \pm 0.7$} & $56.2 \pm 0.8$ & $60.2 \pm 0.7$ & $64.1 \pm 0.8$ \\
    DIFFomer-s & $67.8 \pm 1.1$ & \multicolumn{1}{l}{$73.7 \pm 0.6$} & \multicolumn{1}{l}{$76.4 \pm 0.5$} & $57.7 \pm 0.3$ & $61.2 \pm 0.6$ & $65.9 \pm 0.8$ \\
    DIFFomer-a & $66.8 \pm 1.1$ & \multicolumn{1}{l}{$72.9 \pm 0.7$} & \multicolumn{1}{l}{$75.3 \pm 0.6$} & $57.9 \pm 0.7$ & $61.3 \pm 1.0$ & $64.8 \pm 1.0$ \\
    \midrule
    \textbf{ParaFormer} & $\mathbf{68.5 \pm 1.3}$ & $\mathbf{74.3 \pm 0.2}$      &  $\mathbf{76.5 \pm 0.3}$     & $\mathbf{59.5 \pm 0.4}$ & $\mathbf{62.8 \pm 0.4}$ & $\mathbf{67.2 \pm 1.0}$ \\
    \bottomrule
    \label{tab:classification}
    \end{tabular}}

    \vspace{-1ex}
\end{table}

\paragraph{\textbf{Results.}} Table~\ref{tab:classification} exhibits the accuracy across the two datasets with different numbers of labels. As shown in Table~\ref{tab:classification}, ParaFormer consistently outperforms all baselines across all settings, indicating the effectiveness of the generalized PageRank attention. Notably, ParaFormer has higher relative improvement, when labels are more limited, indicating its strong generalization ability.

\subsection{Large Graph Results}
\label{sec: Large-sized Graph}
\paragraph{\textbf{Setup. }}
To assess the scalability of ParaFormer, we perform node classification experiments on four representative large graph datasets with millions of nodes and edges from the open graph benchmark (OGB)~\cite{ogb-20}. These datasets include two homophilic graphs-- ogbn-arxiv  and Amazon2M, as well as two heterophilic graphs -- pokec and arxiv-year.  For Amazon2M and ogbn-arxiv, we follow the splitting protocol from~\cite{Nodeformer22} and~\cite{SGFormer23}, respectively. For the two heterophilic graphs, we follow the official splits of OGB~\cite{ogb-20}.
More details can refer to \href{https://anonymous.4open.science/r/ParaFormer-WSDM/}{Appendix ~\ref{appendix:imple_details}}.

\paragraph{\textbf{Baselines. }}
Since not all baselines in Section~\ref{sec: Medium-sized Graph} can effectively learn the representations for large-scale graphs, we select MLP, three scalable GNNs (GCN~\cite{GCN17}, SGC~\cite{SGC-19}, SIGN~\cite{sign}), and two scalable graph Transformers (NodeFormer~\cite{Nodeformer22}, SGFormer~\cite{SGFormer23}) as baselines. Additionally, we further compare two message-passing GNNs with neighbor sampling optimization~\cite{graphsaint}, dubbed GCN-Nsampler and GAT-Nsampler.

\begin{table}[t]
  \centering
  \caption{The results on large-scale graphs for node classification tasks.}
  
    \resizebox{0.48\textwidth}{!}{
    \begin{tabular}{lllll}
    \hline
    \textbf{Dataset} & \textbf{arXiv-year} & \textbf{Amazon2M} & \textbf{pokec} & \textbf{ogbn-arxiv} \bigstrut\\

    \hline
    
    GCN   & 40.58 ± 0.39 & 83.90 ± 0.10  & 62.31 ± 1.13 & 71.74 ± 0.29 \\
    SGC   & 32.83 ± 0.13 & 81.21 ± 0.12 & 52.03 ± 0.84 & 67.79 ± 0.27 \\
    GCN-Nsampler & 39.34 ± 0.31 & 83.84 ± 0.42 & 63.75 ± 0.77 & 68.50 ± 0.23 \\
    GAT-NSampler  & 36.90 ± 0.15 & 85.17 ± 0.32  & 62.32 ± 0.65 & 67.63 ± 0.23 \\
    SIGN  & 44.49 ± 0.14 & 80.98 ± 0.31 & 68.01 ± 0.25 & 70.28 ± 0.25 \bigstrut[b]\\
    \hline
    NodeFormer  & 36.50 ± 0.24 & 87.85 ± 0.24 & 70.32 ± 0.45 & 59.90 ± 0.42 \bigstrut[t]\\
    SGFormer  & 48.99 ± 0.21 & 89.09 ± 0.10 & 73.76 ± 0.24 & 72.63 ± 0.13 \bigstrut[b]\\
    \hline
    \textbf{ParaFormer} & \textbf{49.31 ± 0.40} & \textbf{89.36 ± 0.14} & \textbf{74.93 ± 0.40} & \textbf{72.87 ± 0.21} \bigstrut\\
    \hline
    \end{tabular}
    }
  
  \label{tab:large-graph-results}
\end{table}

\begin{table*}[t]
  \setlength{\tabcolsep}{5mm}
  \centering
  \caption{The test accuracy (Acc. \%) and memory consumption (Mem. GB) of ParaFormer and ablated components. `$\textbf{L}_{\gamma}$' represents whether $\gamma_k$ is learnable. `S-GPA' indicates the scalable GPA.}
  \vspace{-2ex}
    \resizebox{\textwidth}{!}{\begin{tabular}{c|ccc|r|r|r|r|r|r|r|r|r|r}
    \toprule
    \multirow{2}{*}{\textbf{Variant}} & \multirow{2}{*}{\textbf{GPA}}    & \multirow{2}{*}{$\textbf{L}_{\gamma}$}  & \multirow{2}{*}{\textbf{S-GPA}}    & \multicolumn{2}{c}{Cora} & \multicolumn{2}{c}{PubMed} & \multicolumn{2}{c}{Chameleon} & \multicolumn{2}{c}{Squirrel} & \multicolumn{2}{c}{Deezer} \\
      & & & & {\textbf{Acc.}} & \textbf{Mem.} & {\textbf{Acc.}} & \textbf{Mem.}  & {\textbf{Acc.}} & \textbf{Mem.} & {\textbf{Acc.}} & \textbf{Mem.}  & {\textbf{Acc.}} & \textbf{Mem.} \\
    \midrule
       (a)  & \cmark & \cmark &  \cmark & \textbf{88.7}  & 0.63 & \textbf{90.3} & 1.54 & \textbf{45.9} & 0.61 & \textbf{43.3} & 0.69 & \textbf{67.3} & 4.82\\
       (b)  & \cmark & \ccross &  \cmark & 88.3  & 0.63 & 90.0 & 1.34 & 44.9 & 0.61 &42.7 & 0.69 & 66.7 & 4.68 \\
       (c)  & \cmark & \cmark &  \ccross & 88.4  & 0.74 & 90.2& 8.49 & 45.3 & 0.62 & 42.1 & 0.74 & 65.8 & 20.06\\
       (d)  & \ccross & NA &  \cmark & 87.5  & 0.63 & 89.8 & 1.10 & 44.8 & 0.61 & 42.3  & 0.69 & 65.0 & 4.52\\
       (e)   & \ccross & NA & \ccross & 88.0  & 0.74 & 89.8 & 8.49 & 45.0 & 0.61 & 42.1 & 0.73 & 63.7 & 20.06\\
   \bottomrule
    \end{tabular}
    }
    
  \label{tab:ablation study}
\end{table*}

\paragraph{\textbf{Results. }}
Table \ref{tab:large-graph-results} demonstrates consistent performance improvements across all benchmarks. The results reveal that advanced Graph Transformers, specifically SGFormer and ParaFormer, significantly outperform scalable GNNs, emphasizing the importance of all-pairs attention. The global attention mechanism supplies the model with abundant additional information beyond the local neighbors, thereby facilitating more accurate node classification. In comparison to SGFormer, which employs a basic linear attention, the improved results indicate that GPR can effectively enhance the expressiveness of the attention mechanism without necessitating a substantial increase in the number of parameters.

\begin{table}[t]

  \centering
  
  \caption{GPU memory usage and time for 5 epochs of ParaFormer and ParaFormer w/o scalable GPA.}
  
    \resizebox{0.48\textwidth}{!}{
    \begin{tabular}{lrrrrrr}
    \toprule
    \textbf{Dataset} & \textbf{Cora} & \textbf{PubMed} & \textbf{Deezer} & \textbf{ogbn-arxiv} & \textbf{pokec} & \textbf{Amazon2M} \\
    \midrule
    \# nodes & 2,708 & 19,717 & 28,281 & 169,343 & 1,632,803 & 2,440,029 \\
    \# edges & 5,278 & 44,324 & 92,752 & 1,166,243 & 30,622,564 & 61,859,140 \\
    \midrule
    Memory (w/ S-GPA) & 0.63G & 1.54G & 4.60G & 8.38G & 77.5G & 89.3G \\
    Memory (w/o S-GPA)  & 1.18G & 9.11G  & 20.83G & OOM & OOM & OOM \\
    Time (w/ S-GPA) & 0.29s & 0.50s & 1.07s & 2.69s & 23.9s & 36.08s \\
    Time (w/o S-GPA)  & 0.46s & 1.82s  & 3.85s & NA & NA & NA \\

    \bottomrule
    \end{tabular}}
    \label{tab:GPA_efficiency}
    
\end{table}

{Table~\ref{tab:GPA_efficiency} compares the training time and GPU memory consumption in 5 epochs of ParaFormer w/ and w/o the optimization of scalable GPA (S-GPA). The scalable GPA reduces the complexity to linear regarding the number of nodes and edges. ParaFormer without the scalable GPA spends extra 1-3$\times$ time and storage resources on small graphs and runs out-of-memory on large graphs. These results verify the efficacy of S-GPA in enhancing computational speed and optimizing memory usage. Please note in Table~\ref{tab:GPA_efficiency}, we input the entire graph into ParaFormer to prove linearity. Practically, extremely large graphs are trained using mini-batches~\cite{SGFormer23}, which do not require such extensive memory.
}

\subsection{Ablation Studies}

\subsubsection{The Effectiveness of Each Component in ParaFormer}

To study each component, we conduct ablation studies that use fixed or learnable weights $\{\gamma_k \mid k=0, \cdots K\}$, and removing the scalable optimization of linear attention. The default auxiliary GNN in ParaFormer is GCN. Five model variants, Variant (a)-(e) are shown in Table~\ref{tab:ablation study}. The experiments are node classification task on five datasets, where the prediction accuracy and GPU memory usage (GB) can be found in the Table~\ref{tab:ablation study}.

From the ablated experiments, we summarize five key observations: 
\ding{172} The intact ParaFormer (Variant (a)) consistently performs the best, underscoring the necessity of each component. 

\ding{173} When replacing GPA with vanilla attention, ParaFormer with vanilla attention (Variant (d)) and with linear attention (Variant (e)) manifest similar prediction accuracy. This phenomenon suggests the efficient linear attention mechanism almost does not lose information in our scenarios. 
\ding{174} If we fix the parameter $\gamma_k$, the accuracy declines. This is because a learnable $\gamma_k$ can further mitigate the over-smoothing issue as discussed in Section~\ref{sec:paraformer}. 
\ding{175} Beyond accuracy, the efficient attention mechanism (Variant (a)) significantly reduces the required GPU memory, especially on graphs with more than 10,000 nodes. ParaFormer saves $81.86\%$ and $75.97\%$ GPU memory on PubMed and Deezer datasets, respectively. 

\ding{176} Regarding GPU memory consumption, GPA and the usage of learnable weights $\gamma_k$ (Variant (a)) introduce only a slight memory cost, showcasing the real utility of ParaFormer.

\begin{figure}[t]
    \centering
    \captionsetup{aboveskip=2pt}
    \subfigure[Different $K$ in ParaFormer]{
        \includegraphics[width=0.47\columnwidth]{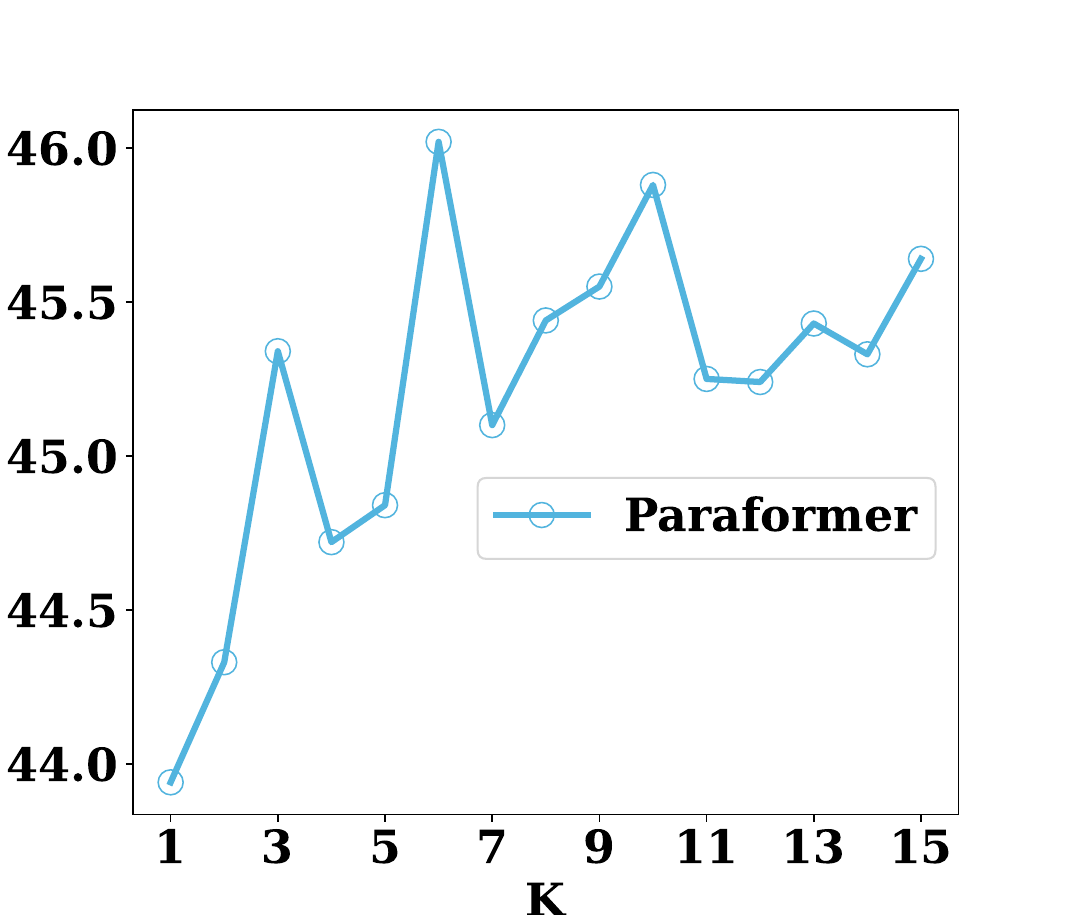}
    }
    \subfigure[Distribution of $\gamma_k$]{
        \includegraphics[width=0.47\columnwidth]{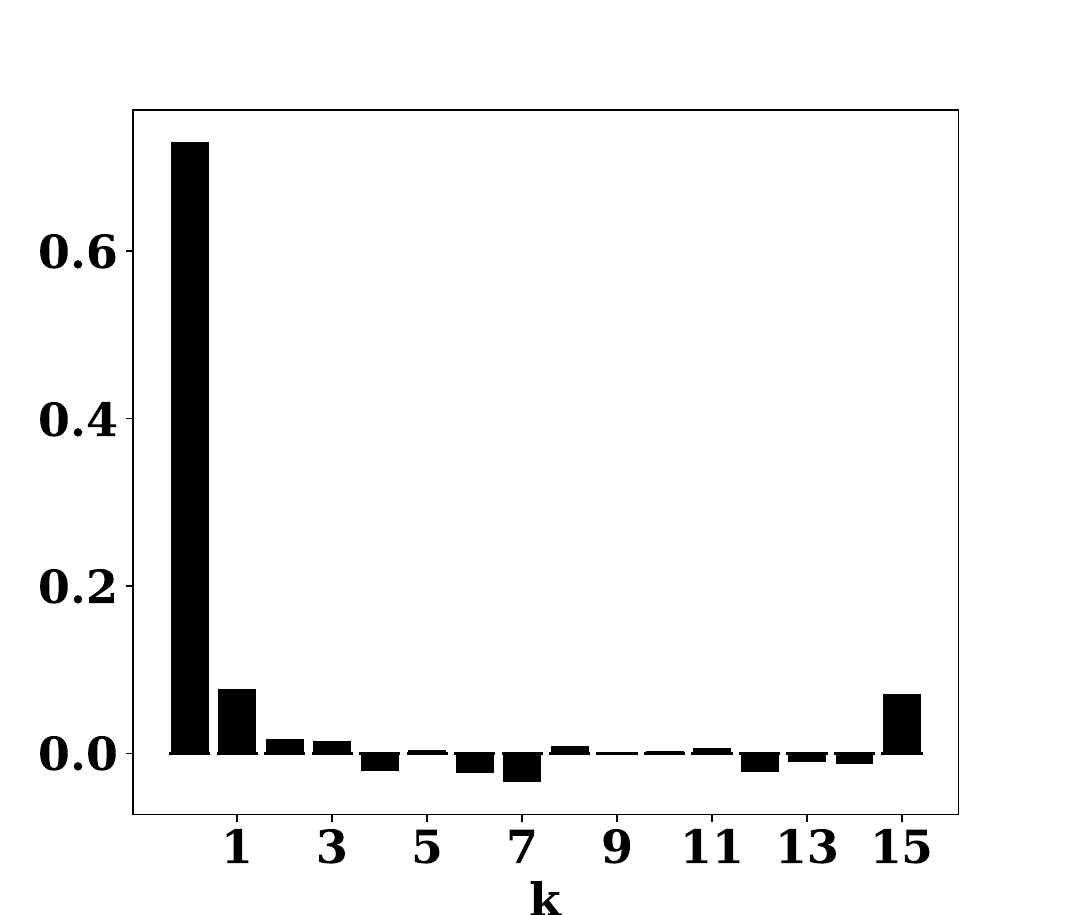}
    }
    \caption{Figure (a) shows, in Chameleon dataset, the influence of hyper-parameter K on accuracy. Figure (b) exhibits the distribution of $\gamma$ when $K=15$.
    }
    \label{fig:K-analysis}
\end{figure}

\subsubsection{Impact of the number of layers $K$}\label{sec.impactofk}
To examine the impact of the number of layers $K$, we conduct the experiments on Chameleon, varying the layer number  $K$ from 1 to 15. Figure~\ref{fig:K-analysis}(a) reports the performance curve of different $K$.   As shown in Figure~\ref{fig:K-analysis}(a), the accuracy gradually improves as the value of $K$ increases, highlighting the effectiveness of multi-hop design. Once $K$ reaches a certain size, the accuracy stabilizes. Furthermore, to investigate the behavior of adaptive weights $\gamma$, we visualize the adaptive weights of the model with $K=15$ in Figure~\ref{fig:K-analysis}(b). From Figure~\ref{fig:K-analysis}(b), we can observe: \ding{172} The value of $\gamma_k$ becomes relatively small for large values of $K$, indicating that ParaFormer can learn the necessary hop depth from the data and is not overly sensitive to the specific value of $K$. \ding{173} Certain weights of $\gamma$ exhibit negative values, consistent with the theoretical findings in \cref{proposition:filter}. It demonstrates that ParaFormer can act as a high-pass filter during the learning of heterophilic graphs.

\subsubsection{Impact of Combining Local and Global Information}
The weight hyperparameter $\mathbf{\beta}$ serves to balance the importance of local and global information within the graph. As the significance of local v.s. global information may vary between specific graphs, as illustrated in Table~\ref{tab:impact_beta}, there is no universally optimal $\mathbf{\beta}$ for all graphs.

\begin{table}[t]
  \setlength{\tabcolsep}{3mm}
  \centering
  \caption{The relation between final test accuracy and the value of $\mathbf{\beta}$.}
  
    \begin{tabular}{lrrrrr}
    \toprule
    \textbf{Value of} $\mathbf{\beta}$ & \textbf{0} & \textbf{0.3} & \textbf{0.5} & \textbf{0.7} & \textbf{1} \\
    \midrule
    Cora  & 77.4  & 88.1  & 87.4  & 88.7  & 87.6 \\
    PubMed & 88.3  & 90.2  & 90.1  & 90.3  & 89.6 \\
    Chameleon & 27.6  & 41.8  & 42.6  & 45.9  & 43.0 \\
    \bottomrule
    \end{tabular}
  \label{tab:impact_beta}
\end{table}

\begin{figure*}[t]
    \centering
    \captionsetup{aboveskip=2pt}
    \subfigure[Cora]
    {\includegraphics[width=0.24\textwidth]{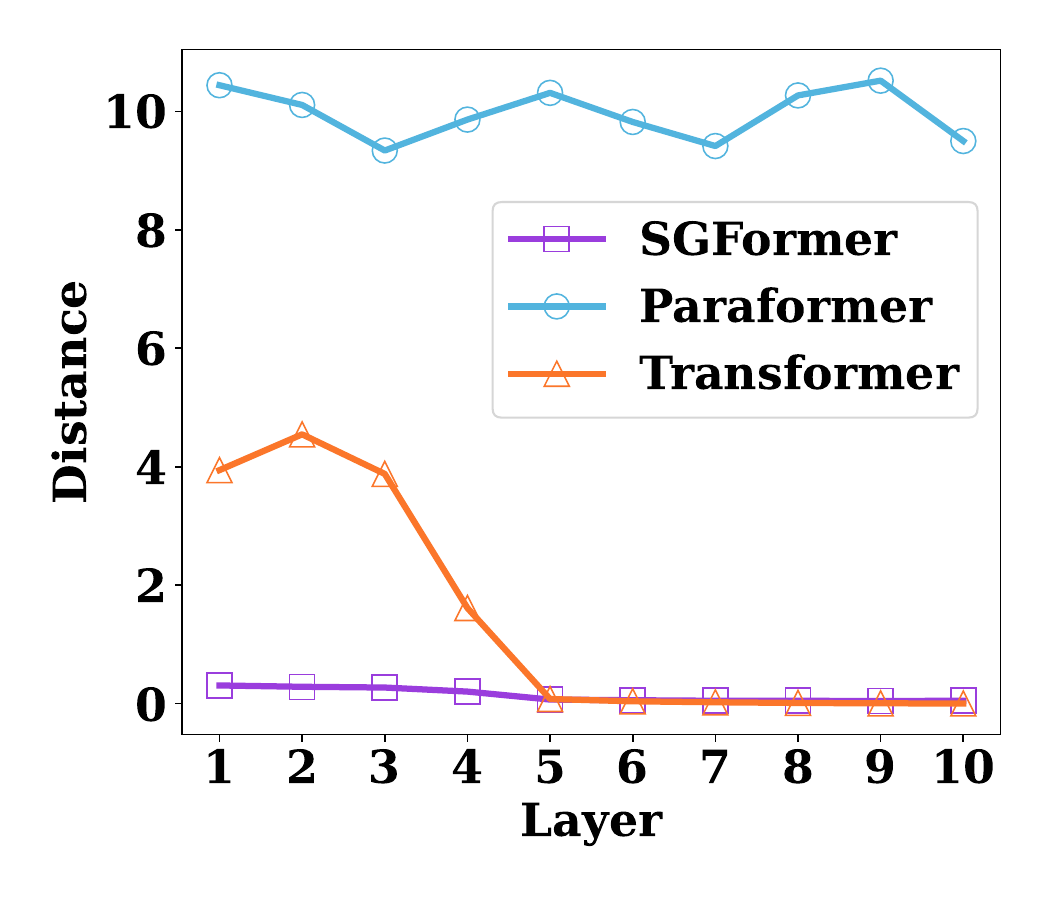}}
    \subfigure[Film]
    {\includegraphics[width=0.24\textwidth]{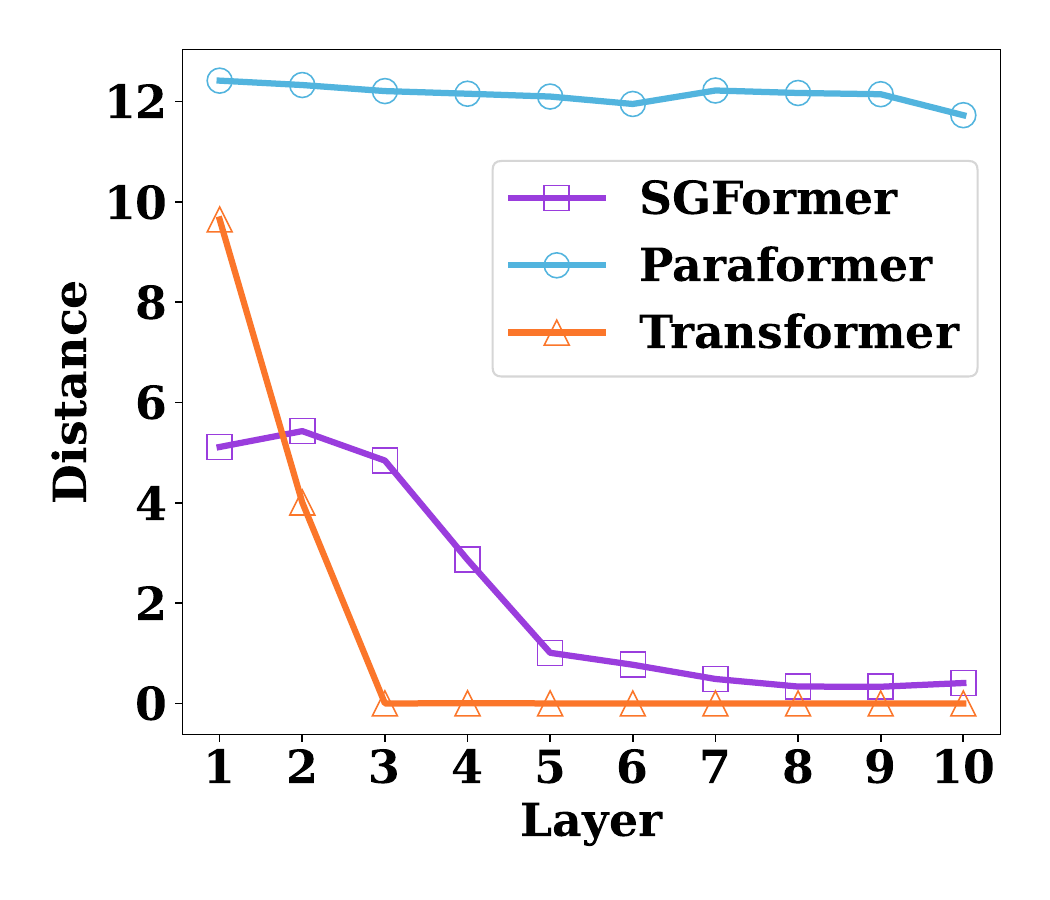}}
    \subfigure[Cora]
    {\includegraphics[width=0.24\textwidth]{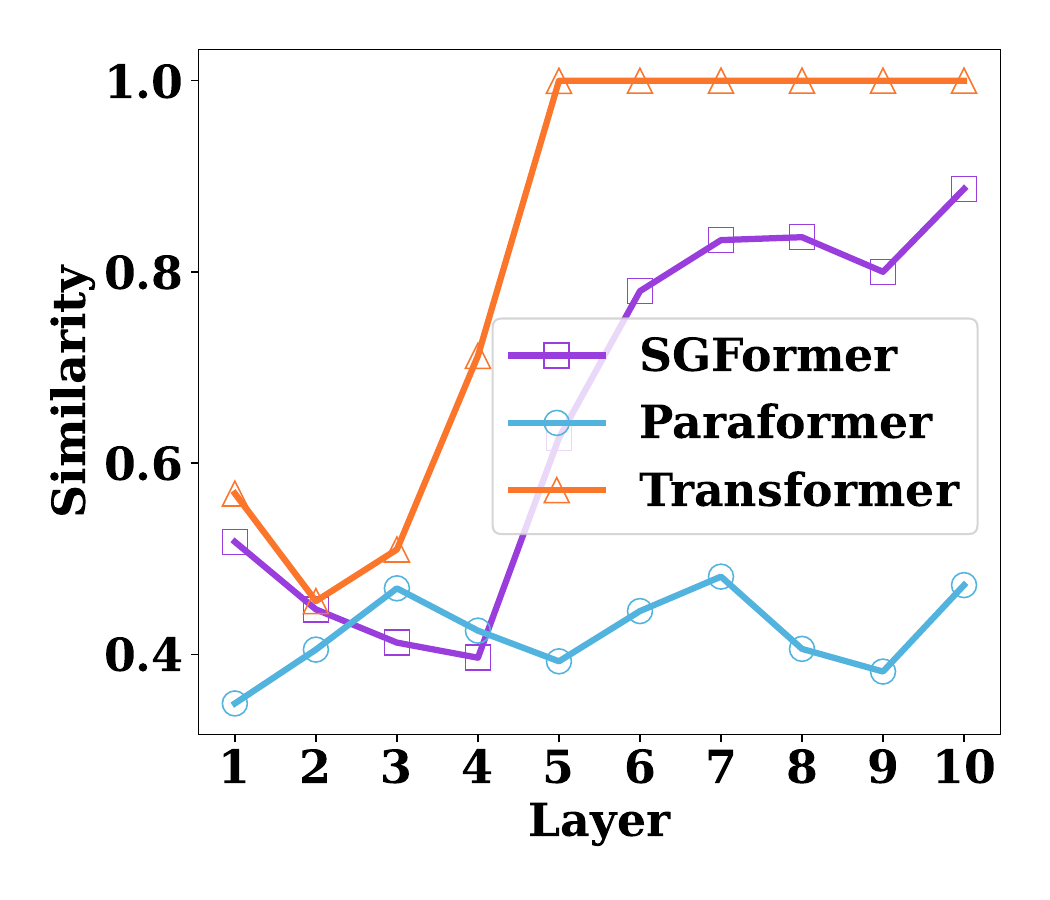}}
    \subfigure[Film]
    {\includegraphics[width=0.24\textwidth]{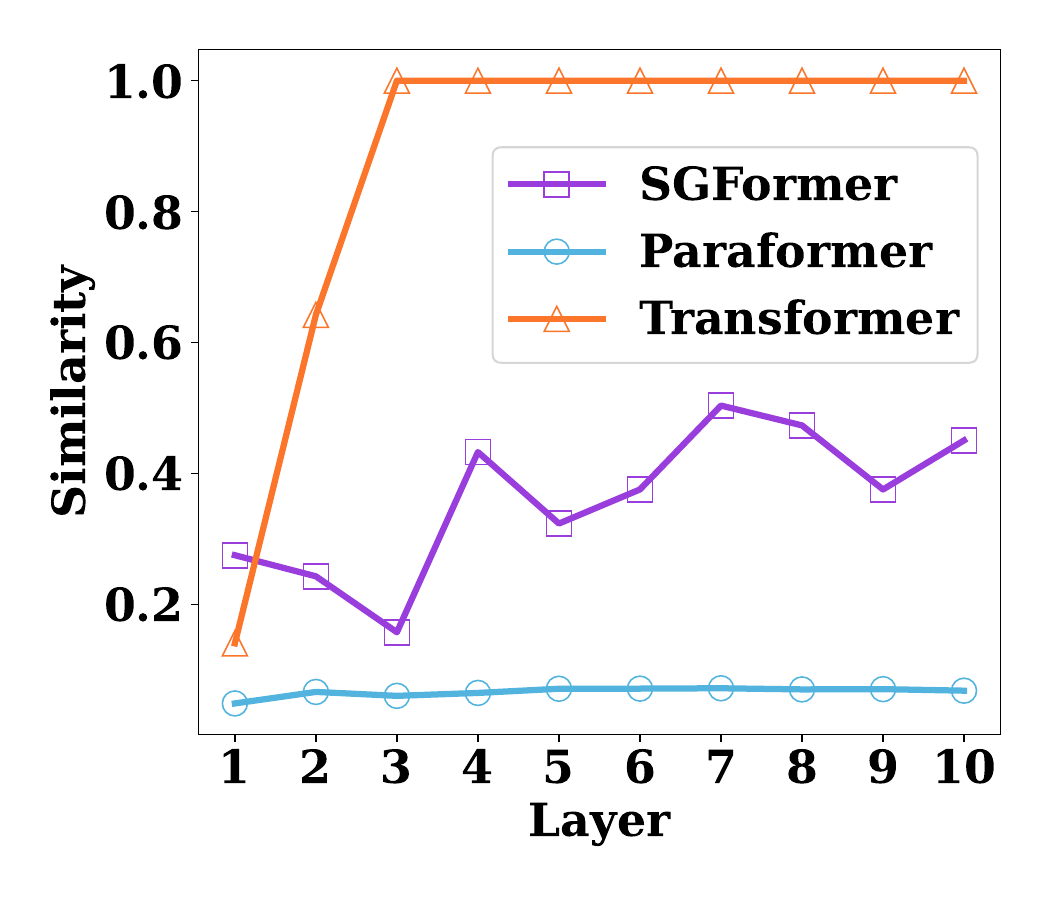}}
    \caption{The L2-distance and cosine similarity of Transformers', SGFormers' and ParaFormers' last layer representations with different number of layers or the value of K.
    }
    
    \label{fig:paraformer-oversmoothing}
\end{figure*}

\subsection{Mitigation of Over-smoothing}\label{sec.antooversm}

To validate whether ParaFormer is effective in mitigating over-smoothing, we compare the degree of over-smoothing of ParaFormer against SGFormer and vanilla Transformer.
Specifically, we train models with 1 to 10 layers on Cora and Film in node classification tasks, and compute the average pair-wise L2-distance 

between the embeddings in the final layer of all-pairs of nodes. Figure~\ref{fig:paraformer-oversmoothing} depicts the tendency of the distance and similarity of last layer's embeddings as the number of layers increases. 

The curves in Figure~\ref{fig:paraformer-oversmoothing} demonstrate that in general ParaFormer is the most resilient deep model against over-smoothing. For vanilla Transformer, the distance between node representations converges fast to 0.0 when the number of layers reaches 3 or 4, indicating that the models suffer from severe over-smoothing. 
The self-loop attention mechanism in SGFormer mitigates over-smoothing to some extent; however, models with deeper layers exhibit increasing over-smoothing.
In contrast, ParaFormer, which can be regards as Transformer of $K$ attention blocks with shared parameters, adaptively aggregates the output of each blocks into the final representation, keeping the individual node representation distinct. The results indicates that the representation distance remain stable as the number of layers increases, demonstrating ParaFormer can successfully mitigate over-smoothing.

\section{Related Work}

In this section, we will elaborate on how recent works solving the over-smoothing effects in deep GNNs and discuss the recent advances in Graph Transformers~\cite{grover20,DBLP:conf/kdd/MinLLHL23} for preventing over-smoothing.

Over-smoothing~\cite{over-smoothing18, chen2020measuring, rusch2023survey} remains an intrinsic challenge in deep GNNs, fundamentally stemming from their low-pass filtering nature. This phenomenon critically restricts the depth scalability of GNNs for capturing global graph interactions. While numerous solutions attempt to address this limitation, including architectural modifications through normalization~\cite{Zhao2020PairNorm, guo2023contranorm, scholkemper2025residual}, regularization~\cite{Dropedge-20, do2021graph, chen2023agnn, fang2023dropmessage}, or residual connections~\cite{JKNet-18, GCNII-20,huang2022tacklingoversmoothinggeneralgraph, scholkemper2025residual}, to enhance training dynamics, they predominantly rely on the \textit{local message-passing paradigm}. Consequently, these approaches inherently aggregate information from one-hop neighbors per layer, failing to effectively model long-range dependencies in large graphs~\cite{dwivedi2022long}. Transformers~\cite{Transformer} have recently emerged as a promising alternative for graph representation learning~\cite{GraphTrans21, SAT-22, GraphGPS-22, ANS-GT-22, yan2024llm, qin2024llm, zhu2025hhgt}, leveraging their all-pair attention mechanism to potentially circumvent deep GNN stacking. Various strategies integrate topological awareness through techniques, such as positional encodings~\cite{SAN-21, Graphformer21}, structural attention~\cite{Graphformer21, deng2024polynormer}, or hybrid GNN-Transformer designs~\cite{grover20, cobformer, GraphTrans21}. 

However, we identify a critical gap: \textit{Transformers intrinsically function as low-pass filters}~\cite{attention-oversmoothing-vit} and exhibit unexpected vulnerability to over-smoothing in graph domains. Although over-smoothing in Transformers has been observed and mitigated via residual connections in vision~\cite{attention-oversmoothing-vit} and language domains~\cite{attention-oversmoothing-bert}, our work reveals that \textit{even two-layer Graph Transformers generate indistinguishable node representations}. 
This fundamental limitation renders existing mitigation strategies insufficient for graph-structured data, and establishes a novel research challenge that our approach uniquely addresses.

\section{Conclusion}
In this paper, we identify that Graph Transformers face a fundamental limitation: their intrinsic low-pass filtering induces severe over-smoothing, particularly detrimental for graph tasks requiring high-frequency signals. To address this, we propose ParaFormer, a novel Graph Transformer architecture integrating a PageRank-enhanced attention mechanism. 
This design successfully addresses the over-smoothing limitation observed in prior Graph Transformer models. Furthermore, ParaFormer achieves linear computational complexity relative to graph size, enabling efficient scaling to large-scale graph datasets. The integration of PageRank requires only $\mathcal{O}(K)$ additional parameters, ensuring minimal overhead and seamless adoption within existing Graph Transformer frameworks. 
Extensive experiments across 13 benchmarks validate that ParaFormer achieves state-of-the-art performance while mitigating over-smoothing, by maintaining distinguishable representations even at depth. 
This work establishes a new paradigm for deep graph learning, bridging global structural modeling with spectral adaptability to overcome a critical obstacle in Graph Transformers.

Future works include two key directions: First, while diverse Graph Transformer variants~\cite{yuan2025survey} (e.g., using \textit{structural positional encodings} or \textit{attention bias}) continue to emerge, their susceptibility to over-smoothing and underlying mechanisms require systematic study. 
Second, ParaFormer’s capability to model both local and global dependencies while resisting over-smoothing makes it particularly valuable for real-world graphs with complex long-range interactions, such as protein interaction networks where multi-hop paths encode critical biological functions.
Therefore, we will extend ParaFormer beyond social graphs to critical domains mentioned above.

\section{Ethical Considerations}
This work presents a methodological advancement in graph learning, evaluated on standard benchmark datasets. We foresee minimal direct ethical risks as the research does not involve sensitive data or target high-stakes applications. However, we acknowledge that any graph learning technique, could potentially amplify biases or raise privacy concerns if misapplied to sensitive domains (e.g., using biased data or without privacy safeguards). We affirm compliance with the WSDM Code of Ethics.

\section*{Acknowledgment}
This work was jointly supported by the following projects: Shenzhen Science and Technology Innovation Commission under Grant JCYJ20220530143002005, Tsinghua Shenzhen International Graduate School Start-up fund under Grant QD2022024C, Shenzhen Ubiquitous Data Enabling Key Lab under Grant ZDSYS20220527171406015, the Research Grants Council of the Hong Kong Special Administrative Region, China (No. CUHK 14217622) and Damo Academy (Hupan Laboratory) through Damo Academy (Hupan Laboratory) Innovative Research Program,

\bibliographystyle{ACM-Reference-Format}
\bibliography{paraformer}

\newpage
\appendix

\section{Terminology Explanation}
\label{appendix:explain}
Denote $\FT: \real^n \rightarrow \complex^n$ be the Discrete Fourier Transform. Applying $\FT$ to a sequence of nodes $\mathbf{H}$ is equivalent to left-multiplying by a DFT matrix, where the rows are the Fourier basis vectors $\Mat{f}_k = \begin{bmatrix} e^{2 \pi \mathrm{j}(k-1) \cdot 0} & \cdots & e^{2\pi \mathrm{j} (k-1) \cdot (n-1)} \end{bmatrix}^T \big/ \sqrt{n} \in \real^n$, where $k$ represents the $k$-th row of the DFT matrix, and $\mathrm{j}$ is the imaginary unit.
Let $\Mat{\tilde{z}} = \FT\Mat{z}$ represent the spectrum of $\Mat{z}$. We can split it into two parts, as $\Mat{\tilde{z}}_{dc} \in \complex$ and $\Mat{\tilde{z}}_{hc} \in \complex^{n-1}$ take the first element and the rest elements of $\Mat{\tilde{z}}$, respectively.
Formally, we can define $\DC{\Mat{z}} = \Mat{\tilde{z}}_{dc} \Mat{f}_1 \in \complex^n$ as the Direct-Current (DC) component of input $\Mat{z}$, and $\HC{\Mat{z}} = \begin{bmatrix} \Mat{f}_2 & \cdots & \Mat{f}_n \end{bmatrix} \Mat{\tilde{z}}_{hc} \in \complex^{n}$ as the complementary high-frequency component (HC).

\section{Theoretical Proofs}
\label{appendix:proof_list}

\subsection{Proof of Theorem 1}
\label{appendix:proof1}
\addtocounter{theorem}{-3}
\begin{theorem}
Given $\gamma_1 > 0 , \gamma_k=(-a)^k/2, a\in(0,\frac{1}{n}), k>1$, the attention matrix $\hat{\Mat{A}}$ in ParaFormer is a polynomial graph filter that can function as both a low-pass and high-pass graph filter.
\end{theorem}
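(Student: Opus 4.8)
The plan is to reduce the matrix-valued statement to the scalar spectral response of the polynomial filter and then exhibit both regimes within it. Writing the GPA output as $\mathbf{Z}=g(\hat{\mathbf{A}})\mathbf{V}$ with $g(x)=\sum_{k=0}^{K}\gamma_k x^k$, I would first observe that $\hat{\mathbf{A}}=\text{Softmax}(\mathbf{Q}\mathbf{K}^T)$ is row-stochastic, so its spectrum lies in the unit disk with the trivial eigenvalue $\mu=1$ carrying the DC component $\DC{\cdot}$ and the remaining eigenvalues $\{\mu_i\}_{i\ge 2}$ carrying the high-frequency component $\HC{\cdot}$. Because $g(\hat{\mathbf{A}})$ scales each spectral mode by $g(\mu_i)$, the filter is low-pass exactly when $|g(1)|\ge |g(\mu_i)|$ for the high-frequency modes (so DC is relatively amplified), and high-pass when some mode satisfies $|g(\mu_i)|>|g(1)|$. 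The goal thus becomes showing that the prescribed weight family realizes both inequalities.

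Next I would substitute the weights and sum the tail. With $\gamma_k=(-a)^k/2$ for $k>1$ and $a\in(0,\tfrac1n)$, the bound $|a\mu|\le a<\tfrac1n<1$ guarantees the tail is a convergent geometric series, giving the controlled expression $g(\mu)=\gamma_0+\gamma_1\mu+\tfrac12\sum_{k=2}^{K}(-a\mu)^k\approx \gamma_0+\gamma_1\mu+\frac{(a\mu)^2}{2(1+a\mu)}$, with truncation error of order $a^{K+1}$. This is the core computation, and the hypothesis $a<1/n$ is precisely what keeps the series well-defined and its remainder negligible.

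I would then split into two regimes. For the low-pass direction, on $\mu\in[0,1]$ the derivative $g'(\mu)=\gamma_1+\frac{a^2\mu(2+a\mu)}{2(1+a\mu)^2}$ is strictly positive since $\gamma_1>0$, so $g$ is increasing and $g(1)$ dominates every $g(\mu_i)$ with $\mu_i\in[0,1)$ — the DC mode is amplified most, recovering the classical low-pass behavior of the vanilla Transformer. For the high-pass direction, I would exploit that the alternating signs in $\gamma_k=(-a)^k/2$ make the signed series behave differently on the negative part of the spectrum: evaluating the gap gives $g(-1)-g(1)=-2\gamma_1+\frac{a^3}{1-a^2}+O(a^{K+1})$, so whenever $\gamma_1$ is small enough within its admissible positive range (concretely $\gamma_1<\frac{a^3}{2(1-a^2)}$) there is a high-frequency eigenvalue with $|g(\mu)|>|g(1)|$, which is the high-pass condition. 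Since the same polynomial family satisfies one inequality or the other depending on the magnitude of $\gamma_1$ and the sign pattern of the tail, the filter is adaptive, proving the claim.

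The step I expect to be the main obstacle is the high-pass direction: I must ensure that the admissible spectrum of a row-stochastic attention matrix actually reaches eigenvalues far enough from $1$ (ideally with negative real part) for the amplification inequality to bite, and then compare the magnitudes of the signed, truncated series cleanly. Simultaneously controlling the truncation remainder and the complex, possibly non-diagonalizable spectrum of a non-symmetric $\hat{\mathbf{A}}$ is the delicate part, and the $a<1/n$ hypothesis is what I would lean on to keep both under control.
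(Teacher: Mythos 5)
Your proposal takes a genuinely different route from the paper, and the route has a gap that you yourself flag but do not close. The paper never touches the eigenvalues of $\hat{\Mat{A}}$: its notion of low/high frequency is the Fourier one, where the DC component is the fixed projector $\DC{\Mat{x}}=\frac{1}{n}\Mat{1}\Mat{1}^T\Mat{x}$. The proof simply left-multiplies the polynomial filter by $\frac{1}{n}\Mat{1}\Mat{1}^T$, reduces everything to the column sums $c=\Mat{1}^T\hat{\Mat{A}}$, and shows that with $\gamma_k=(-a)^k/2$ and $a<1/n$ the resulting signed geometric series is suppressed, so the DC output vanishes and the filter is high-pass; the low-pass regime is obtained separately by taking the degenerate weights $\gamma_0=\gamma_1=1,\ \gamma_{k\ge 2}=0$, which recovers the vanilla Transformer already proven low-pass in prior work, and the two weight settings are then combined. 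The claim being proved is thus "the parameter family contains a high-pass instance and a low-pass instance," which requires no spectral information about $\hat{\Mat{A}}$ beyond its column sums being bounded by $n$.

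Your spectral reformulation proves something harder and, as written, does not go through. First, for a non-symmetric row-stochastic $\hat{\Mat{A}}$ the eigendecomposition does not align with the $\DC{\cdot}/\HC{\cdot}$ split: $\Mat{1}$ is a right eigenvector for $\mu=1$, but the orthogonal complement of $\Mat{1}$ is not invariant, so "each mode is scaled by $g(\mu_i)$" with modes identified as DC versus HC is not available, and your low-pass argument (monotonicity of $g$ on $[0,1]$) ignores complex and negative eigenvalues entirely. Second, and more seriously, the high-pass direction hinges on an eigenvalue with real part near $-1$; for a softmax attention matrix all entries are strictly positive, so by Perron--Frobenius the eigenvalue $1$ is simple and all other eigenvalues lie strictly inside the unit disk, with nothing forcing any of them near $-1$. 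Even granting such an eigenvalue, your derived condition $\gamma_1<\frac{a^3}{2(1-a^2)}$ with $a<1/n$ confines $\gamma_1$ to an $O(n^{-3})$ window, which is an extra hypothesis not present in the theorem statement ($\gamma_1>0$ arbitrary). The fix is to abandon the eigenvalue comparison and argue directly about the image of the DC projector under the filter polynomial, as the paper does, or to explicitly restrict to the symmetric/diagonalizable case and prove the needed spectral facts.
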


\begin{proof}
    As defined in prior work~\cite{attention-oversmoothing-vit}, the low frequency component in attention, e.g., direct component (DC), can be written as:
    \begin{align}
        \DC{\mathbf{x}} 
        &= DFT^{-1}\text{diag}(1,0,\dots,0) DFT \mathbf{x} \\
        &= \frac{1}{n} \mathbf{1} \mathbf{1}^T \mathbf{x} 
    \end{align}
    We first prove with $\gamma_k=(-a)^k/2, a\in(0,\frac{1}{n}), k>1$, ParaFormer will only pass the high frequency information.
    Specifically, the low frequency information in ParaFormer can be represented as: $\DC{\sum_{k=0}^K \gamma_k \Mat{A}^k \mathbf{H}} = \frac{1}{n} \mathbf{1} \mathbf{1}^T \sum_{k=0}^K \gamma_k \Mat{A}^k \mathbf{H}$.

    Define $c = \mathbf{1}^T \Mat{A}$.
    Easily, we can have $\mathbf{1}^T \Mat{A}^2 = c^{(2)}$, where $c^{(m)} = \{c_0^m, c_1^m, \dots, c_n^m\}$. Due to the mathematical induction, we have: $\mathbf{1}^T \Mat{A}^k = c^{(k)}$
    \begin{align}
        \DC{\sum_{k=0}^K \gamma_k \Mat{A}^k}
        &= \frac{1}{n}\mathbf{1} \sum_{k=0}^K (-\alpha)^k \mathbf{1}^T \Mat{A}^k\\
        &= \frac{1}{n}\mathbf{1} \sum_{k=0}^K (-\alpha)^k c^{(k)}\\
        &= \frac{1}{n}\mathbf{1} \sum_{k=0}^K \{(-\alpha c_1)^{k}, (-\alpha c_2)^{k}, \dots, (-\alpha c_n)^{k}\}
        \label{eq:conidition}
        \\
        &= \mathbf{0}
    \end{align}
    As $\Mat{A}$ is an attention matrix, $c_i < n$, where $n$ is the number of nodes, then each value in column $i$ of $\Mat{A}$ is lower than $1$, due to the Softmax. We only need to guarantee $\alpha < \frac{1}{n}$, then the Equation~\ref{eq:conidition} will converge to $0$. In this way, the low frequency information will be alleviated and only contain the high frequency information after the filter.

    In another perspective, when \{$\gamma_0=1$, $\gamma_1=1$ and $\gamma_k=0, k=2,3\dots$\}, ParaFormer will be degraded as a vanilla Transformer, and has been proven as a low-pass filter by previous work~\cite{attention-oversmoothing-vit}.

    Thus, by combining two parameters of $\{\gamma\}_{\text{high}}$ and $\{\gamma\}_{\text{low}}$ which can to function as low and high filter as $\frac{1}{2}(\{\gamma\}_{\text{high}} + \{\gamma\}_{\text{low}})$, ParaFormer can adaptively pass the low, high or both frequency information in the graph.
\end{proof}

\subsection{Proof of Theorem 2}
\label{appendix:proof2}
\addtocounter{theorem}{0}
\begin{theorem}
     Given a Generalized PageRank Attention $\text{GPA}$ and a self-attention module $\text{SA}$ initialized with proper weight $\{\gamma_k\}$, we have:
     $\lambda_{\text{GPA}} \le \lambda_{\text{SA}}$.
\end{theorem}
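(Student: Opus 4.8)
The plan is to reduce the comparison of two smoothing rates to a single spectral-norm inequality on the high-frequency subspace. Write $\mathbf{\Pi} = \mathbf{I} - \tfrac{1}{n}\mathbf{1}\mathbf{1}^{T}$ for the orthogonal projector onto the complementary subspace, so that $\HC{\mathbf{X}} = \mathbf{\Pi}\mathbf{X}$ in the notation of Appendix~\ref{appendix:explain}. I would first record two elementary facts about the (row-stochastic) attention matrix: $\hat{\mathbf{A}}^{k}\mathbf{1} = \mathbf{1}$ for every $k$, and right multiplication by $\mathbf{W}_V$ commutes with the left projector $\mathbf{\Pi}$. Together these give, for each $k$,
\begin{align}
\HC{\hat{\mathbf{A}}^{k}\mathbf{V}} = \mathbf{\Pi}\hat{\mathbf{A}}^{k}\mathbf{\Pi}\,\HC{\mathbf{H}}\,\mathbf{W}_V ,
\end{align}
because the direct-current part $\tfrac{1}{n}\mathbf{1}\mathbf{1}^{T}\mathbf{H}$ is fixed by $\hat{\mathbf{A}}^{k}$ and then annihilated by $\mathbf{\Pi}$. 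Summing over $k$ with weights $\gamma_k$ shows that the map $\HC{\mathbf{H}}\mapsto\HC{\mathbf{Z}}$ induced by GPA is $\mathbf{Y}\mapsto\mathbf{\Pi}\,p(\hat{\mathbf{A}})\,\mathbf{\Pi}\,\mathbf{Y}\,\mathbf{W}_V$ with $p(\hat{\mathbf{A}}) = \sum_{k=0}^{K}\gamma_k\hat{\mathbf{A}}^{k}$, while a single self-attention layer induces $\mathbf{Y}\mapsto\mathbf{\Pi}\hat{\mathbf{A}}\mathbf{\Pi}\,\mathbf{Y}\,\mathbf{W}_V$.

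Since the infimum defining $\lambda$ in Definition~\ref{def:sa_rate} is, up to the $\mathbf{W}_V$ dependence that is identical for both modules, the reciprocal of the operator norm of this linear map over the high-frequency subspace, I would pass to these operator norms. Writing $s := \lVert\mathbf{\Pi}\hat{\mathbf{A}}\mathbf{\Pi}\rVert_2$, self-attention is governed by $s$ whereas GPA is governed by $\lVert\mathbf{\Pi}\,p(\hat{\mathbf{A}})\,\mathbf{\Pi}\rVert_2$; because $\mathbf{W}_V$ enters both as the same right factor, its contribution cancels in the comparison and the claim $\lambda_{\text{GPA}}\le\lambda_{\text{SA}}$ becomes
\begin{align}
\lVert\mathbf{\Pi}\,p(\hat{\mathbf{A}})\,\mathbf{\Pi}\rVert_2 \;\ge\; s = \lVert\mathbf{\Pi}\hat{\mathbf{A}}\mathbf{\Pi}\rVert_2 .
\end{align}
In words, the polynomial filter, restricted to the high-frequency subspace, must retain at least as much high-frequency energy as the bare attention matrix; the whole theorem thus reduces to choosing $\{\gamma_k\}$ so this holds, which is what ``proper initialization'' encodes.

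To prove the reduced inequality I would exploit the $k=0$ term, the mechanism by which GPA resists smoothing: on the high-frequency subspace it acts as $\gamma_0\mathbf{\Pi}$, of restricted norm exactly $\lvert\gamma_0\rvert$. Let $\mathbf{u}$ be a unit top right singular vector of $\mathbf{\Pi}\hat{\mathbf{A}}\mathbf{\Pi}$, so $\mathbf{\Pi}\mathbf{u} = \mathbf{u}$ and $\lVert\mathbf{\Pi}\hat{\mathbf{A}}\mathbf{\Pi}\mathbf{u}\rVert_2 = s$. Testing the GPA operator on $\mathbf{u}$ and applying the reverse triangle inequality separates the identity contribution from the smoothing higher powers,
\begin{align}
\lVert\mathbf{\Pi}\,p(\hat{\mathbf{A}})\,\mathbf{\Pi}\,\mathbf{u}\rVert_2 \;\ge\; \lvert\gamma_0\rvert - \sum_{k\ge 1}\lvert\gamma_k\rvert\,\lVert\mathbf{\Pi}\hat{\mathbf{A}}^{k}\mathbf{\Pi}\rVert_2 .
\end{align}
Iterating the single-layer contraction $\lVert\mathbf{\Pi}\hat{\mathbf{A}}\mathbf{\Pi}\rVert_2 = s$ yields the geometric bound $\lVert\mathbf{\Pi}\hat{\mathbf{A}}^{k}\mathbf{\Pi}\rVert_2\le s^{k}$, so the tail is at most $\sum_{k\ge1}\lvert\gamma_k\rvert s^{k}$. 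Proper initialization then means choosing a dominant $\gamma_0$ together with a rapidly decaying schedule $\{\gamma_k\}_{k\ge1}$ (for instance the geometrically small weights $\gamma_k = (-a)^{k}/2$ already used in Theorem~\ref{proposition:filter}) so that $\lvert\gamma_0\rvert - \sum_{k\ge1}\lvert\gamma_k\rvert s^{k}\ge s$, which delivers the required norm inequality.

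The step I expect to be the main obstacle is making ``proper initialization'' quantitative and verifying the lower bound is not vacuous. The reverse triangle inequality is useful only once the weight tail is provably negligible against $\lvert\gamma_0\rvert$, which relies on the uniform estimate $\lVert\mathbf{\Pi}\hat{\mathbf{A}}^{k}\mathbf{\Pi}\rVert_2\le s^{k}$ and, ideally, on a link between the abstract norm $s$ and the explicit $\alpha$-dependent quantity $\sqrt{(e^{2\alpha}+n-1)/(n e^{2\alpha})}$ appearing in the self-attention rate of~\cite{attention-oversmoothing-vit}. A secondary subtlety is that $\hat{\mathbf{A}}$ is non-normal, so $\mathbf{\Pi}\hat{\mathbf{A}}^{k}\mathbf{\Pi}\ne(\mathbf{\Pi}\hat{\mathbf{A}}\mathbf{\Pi})^{k}$ in general; the geometric bound must therefore be obtained by iterating the per-layer high-frequency estimate rather than by naive submultiplicativity of the restricted operator. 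Once a clean sufficient condition on $\{\gamma_k\}$ is fixed, the remainder is routine.
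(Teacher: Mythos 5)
Your argument is essentially correct but takes a genuinely different route from the paper. The paper's proof never touches the high-frequency projector: it plugs into the closed-form expression $\lambda_{\text{SA}}=\sqrt{\lVert \text{Softmax}(\mathbf{P})\rVert_1}\,\lVert\mathbf{W}_V\rVert_2$ inherited from the cited ViT analysis, sets $c=\lVert\text{Softmax}(\mathbf{P})\rVert_1>1$, picks the explicit two-term initialization $\gamma_0=\tfrac{c-1}{2}$, $\gamma_1=-\tfrac{1}{c}$, $\gamma_k=0$ for $k\ge 2$, and bounds the column sums of $\tfrac{c-1}{2}\mathbf{I}-\tfrac{1}{c}\text{Softmax}(\mathbf{P})$ by $\tfrac{c+1}{2}<c$. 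You instead work directly from Definition~\ref{def:sa_rate}, identify $\lambda$ with the reciprocal of the operator norm of $\mathbf{\Pi}\,p(\hat{\mathbf{A}})\,\mathbf{\Pi}$ on the high-frequency subspace, and use a reverse triangle inequality with a dominant $\gamma_0$ against a geometric tail. Both proofs exploit exactly the same loophole — ``proper initialization'' lets the identity term dominate so that GPA barely attends and hence barely smooths — but yours is more structural while the paper's is more concrete (it hands you the weights and the final numeric bound). Two remarks on your version. First, the obstacle you flag about non-normality is actually a non-issue: since $\hat{\mathbf{A}}$ is row-stochastic, $\hat{\mathbf{A}}\mathbf{1}=\mathbf{1}$ implies $\mathbf{\Pi}\hat{\mathbf{A}}^{k}\mathbf{\Pi}=(\mathbf{\Pi}\hat{\mathbf{A}}\mathbf{\Pi})^{k}$ exactly (insert $\mathbf{I}=\mathbf{\Pi}+\tfrac{1}{n}\mathbf{1}\mathbf{1}^{T}$ between factors and note the cross terms vanish), so the bound $\lVert\mathbf{\Pi}\hat{\mathbf{A}}^{k}\mathbf{\Pi}\rVert_2\le s^{k}$ is immediate and your quantitative condition $\lvert\gamma_0\rvert\ge s+\sum_{k\ge1}\lvert\gamma_k\rvert s^{k}$ is trivially satisfiable by choosing $\gamma_0$ large, closing the gap you left open. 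Second, be aware that your bookkeeping and the paper's point in opposite directions: you derive from Definition~\ref{def:sa_rate} that $\lambda$ is \emph{inversely} proportional to the restricted operator norm (more retained high-frequency energy means smaller $\lambda$), whereas the formula the paper invokes makes $\lambda$ \emph{grow} with a matrix norm; your reduction is the one consistent with Definition~\ref{def:sa_rate} as stated, but if you were asked to reconcile with the paper's appendix you would need to resolve that sign convention rather than assume the two coincide.
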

\begin{proof}
    The proof can be directly derived from the definition of smoothing rate~\cite{attention-oversmoothing-vit}. Specifically, we have:
    \begin{align}
            \lambda_{\text{SA}} &= \sqrt{\lVert \text{Softmax}({\mathbf{P}}) \rVert_1}\lVert \mathbf{W}_V \rVert_2\\
    \end{align}
    As $\text{Softmax}({\mathbf{P}}) \in \mathbb{R}^{n\times n}$ is an attention matrix, $\sum_{i,j} \text{Softmax}({\mathbf{P}})_{ij} = n$. As $\text{Softmax}({\mathbf{P}})$ has $n$ columns, we have $\lVert \text{Softmax}({\mathbf{P}}) \rVert_1 > 1$ if not all embeddings are totally identical. Therefore, we denote $c = \lVert \text{Softmax}({\mathbf{P}}) \rVert > 1.$

    We consider a simple initialization: $\{\gamma_0 = \frac{c-1}{2}, \gamma_1 = -\frac{1}{c}, \gamma_k = 0 | k=2,3,4\cdots, K\}$.
    Given these conditions, we consider the smoothing rate of GPA:
    \begin{align}
        \lambda_{\text{GPA}} &=  \sqrt{||\sum_{k=0}^{K}\gamma_k*\text{Softmax}(\mathbf{P})^K||_1}   \lVert \mathbf{W}_V \rVert_2\\
        &=  \sqrt{||\frac{c-1}{2} \Mat{I} - \frac{1}{c} \text{Softmax}(\mathbf{P})||_1}   \lVert \mathbf{W}_V \rVert_2
    \end{align}
    At column $j$, the sum of absolute value in GPA will be:
    \begin{align}
        \sum_{i=1}^n \left| \frac{c-1}{2} \delta_{ij} - \frac{1}{c} p_{ij} \right|, 
    \end{align}
    where $\delta_{ij}$ is the element in identity matrix $\Mat{I}$.
    \begin{align}
        &\sum_{i=1}^{m} \left| \frac{c-1}{2} \delta_{ij} - \frac{1}{c} p_{ij} \right| \\ \le &\frac{c-1}{2} + \frac{1}{c} \sum_{i=1}^{m} |p_{ij}| \\ \le &\frac{c-1}{2} + \frac{1}{c} \times c \\ = &\frac{c+1}{2} < c.
        \label{eqa:softmax_1}
    \end{align}

    With conclusion in Equation~\ref{eqa:softmax_1}, we have:
    \begin{align}
        \lambda_{\text{GPA}} =&\sqrt{||\sum_{k=0}^{K}\gamma_k*\text{Softmax}(\mathbf{P})^K||_1}   \lVert \mathbf{W}_V \rVert_2 \\
        =& \max_j(\sum_{i=1}^{m} \left| \frac{c-1}{2} \delta_{ij} - \frac{1}{c} p_{ij} \right| ) \lVert \mathbf{W}_V \rVert_2 \\
        <& c \lVert \mathbf{W}_V \rVert_2 
        = \lVert \text{Softmax}({\mathbf{P}}) \rVert \lVert \mathbf{W}_V \rVert_2 \\
        =& \lambda_{\text{SA}}    
    \end{align}

    In conclusion, we have $\lambda_{\text{GPA}} < \lambda_{\text{SA}}$ with such initialization.
\end{proof}

\subsection{Proof of Theorem 3}
\label{appendix:proof3}
\begin{theorem}
    Suppose $K$ is sufficiently large, for PageRank-enhanced attention, $\hat{\mathbf{A}}^{k}\mathbf{V}, \forall k\geq k'$, will be over-smoothed. When over-smoothing happens in ParaFormer, the learnable $\gamma_k$ will converge to $0$ with appropriate learning rate.
\end{theorem}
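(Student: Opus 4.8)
The plan is to prove the statement in two stages matching its two clauses: first that the high powers $\hat{\mathbf{A}}^k\mathbf{V}$ are over-smoothed once $k$ exceeds some threshold $k'$, and then that gradient descent drives the corresponding coefficients $\gamma_k$ to zero once this over-smoothing sets in.

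For the first clause I would invoke the spectral structure of $\hat{\mathbf{A}}=\mathrm{Softmax}(\mathbf{Q}\mathbf{K}^T)$. Since $\hat{\mathbf{A}}$ is row-stochastic with strictly positive entries, Perron--Frobenius gives a simple dominant eigenvalue $1$ with left eigenvector $\boldsymbol{\pi}$ and all remaining eigenvalues of modulus $|\lambda_2|<1$, so $\hat{\mathbf{A}}^k \to \mathbf{1}\boldsymbol{\pi}^T$ and $\hat{\mathbf{A}}^k\mathbf{V}\to\mathbf{1}(\boldsymbol{\pi}^T\mathbf{V})$ with all rows becoming identical. Quantitatively, the high-frequency component of Definition~\ref{def:sa_rate} obeys a geometric bound $\lVert\HC{\hat{\mathbf{A}}^k\mathbf{V}}\rVert_F \le C\,|\lambda_2|^{k}$, so choosing $k'$ with $C\,|\lambda_2|^{k'}$ below the over-smoothing threshold makes every $\hat{\mathbf{A}}^k\mathbf{V}$ with $k\ge k'$ over-smoothed. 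This is just the same spectral-gap decay that already governs the smoothing rate in Theorem~\ref{theorem:oversmoothingcompare}.

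For the second clause I would differentiate the cross-entropy loss through $\mathbf{Z}=\sum_{k}\gamma_k\hat{\mathbf{A}}^k\mathbf{V}$: by the chain rule $\partial\mathcal{L}/\partial\gamma_k=\langle\nabla_{\mathbf{Z}}\mathcal{L},\,\hat{\mathbf{A}}^k\mathbf{V}\rangle_F$. In the over-smoothed regime $\hat{\mathbf{A}}^k\mathbf{V}\approx\mathbf{1}\mathbf{r}^T$ with $\mathbf{r}=\mathbf{V}^T\boldsymbol{\pi}$, so this term adds the \emph{same} vector $\gamma_k\mathbf{r}$ to every node representation; after the head it is merely a uniform logit shift shared by all nodes and therefore carries no node-discriminative information. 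I would then fix every other parameter and analyze the scalar map $\gamma_k\mapsto\mathcal{L}(\gamma_k)$: because the shift enters the softmax identically across all nodes, $\mathcal{L}$ is convex in $\gamma_k$ with a unique minimizer, and moving $|\gamma_k|$ away from that minimizer re-weights the label-agnostic over-smoothed component at the expense of the discriminative low-$k$ terms, strictly increasing $\mathcal{L}$. Gradient descent with a step size below $2/L_{\mathrm{loc}}$, where $L_{\mathrm{loc}}$ is the local curvature, then contracts $\gamma_k$ monotonically toward that minimizer; together with the degeneracy that all $k\ge k'$ share the identical matrix $\mathbf{1}\mathbf{r}^T$, only $\sum_{k\ge k'}\gamma_k$ is identifiable and the individual excess is squeezed out, giving minimizer $0$.

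The hard part will be pinning the limit at exactly $0$ rather than at some nonzero, bias-dependent value: the uniform-shift term can in principle lower the loss by behaving as a learned class bias, so $0$ is the true minimizer only in the idealized limit where that role is absorbed elsewhere (an explicit bias in $\phi_\theta$, or a vanishing weight-decay term on $\{\gamma_k\}$). A second difficulty is that $\gamma_k$ does not evolve in isolation --- the head $\phi_\theta$ and the shared $\mathbf{W}_Q,\mathbf{W}_K,\mathbf{W}_V$ co-adapt during joint training --- so a fully rigorous statement needs either a coordinate-wise (other-parameters-fixed) reading of the claim or a two-timescale argument treating $\gamma_k$ as the fast variable. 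I would adopt the coordinate view, which is precisely what the stated ``appropriate learning rate'' hypothesis is there to license.
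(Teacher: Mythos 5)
Your first clause is handled essentially the same way the paper handles it: the paper reads $\hat{\mathbf{A}}$ as the transition matrix of an irreducible Markov chain (all entries strictly positive after Softmax), invokes geometric convergence to the stationary distribution $\boldsymbol{\pi}$, and concludes $\hat{\mathbf{A}}^k\mathbf{V}\to\mathbf{1}\boldsymbol{\pi}^T\mathbf{V}$, a rank-one collapse. Your Perron--Frobenius / spectral-gap phrasing is the same argument in different clothing, so no issue there.

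For the second clause there is a genuine gap, and you have correctly located it yourself: your convexity-and-contraction argument only delivers convergence of $\gamma_k$ to \emph{some} minimizer of the coordinate map $\gamma_k\mapsto\mathcal{L}$, and you concede that this minimizer need not be $0$ because the rank-one term $\mathbf{1}\mathbf{r}^T$ can act as a learned class bias. The paper closes exactly this hole with two ingredients you do not use. First, its working definition of ``over-smoothing happens'' is that the \emph{entire} aggregate $\mathbf{Z}=\sum_k\gamma_k\mathbf{H}^{(k)}$ has collapsed to $\pm c_0\mathbf{1}\boldsymbol{\beta}^T$ with $\boldsymbol{\beta}^T=\boldsymbol{\pi}^T\mathbf{H}_0$ --- not merely the tail terms with $k\ge k'$. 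Second, it sharpens the output Softmax with a temperature $\eta$ and uses a lemma that $\text{Softmax}_\eta(\boldsymbol{\beta})$ tends to the one-hot $\arg\max$ indicator. Under these assumptions every node receives the identical, label-independent prediction, and the gradient evaluates to
\begin{align}
\frac{\partial L}{\partial\gamma_k}=\sum_{i}\eta\Bigl(\max_{j}\boldsymbol{\beta}_j-\boldsymbol{\beta}_{y_i}\Bigr)+o(1)\ \ge\ 0 \quad\text{when } \gamma_k>0,
\end{align}
and to the analogous expression with $\min_j\boldsymbol{\beta}_j$, hence $\le 0$, when $\gamma_k<0$. That is, the gradient always carries the sign of $\gamma_k$, so gradient descent monotonically shrinks $\lvert\gamma_k\rvert$ toward $0$ rather than toward a bias-fitting value; this sign-matching step is the missing idea in your sketch. (Your ``coordinate-wise, other parameters fixed'' reading is also implicitly what the paper adopts.) To repair your version without importing the sharpened softmax and the strong over-smoothing definition, you would need to argue separately that the class-bias role is absorbed by the head $\phi_\theta$ so that the residual minimizer in $\gamma_k$ is exactly $0$ --- which is precisely the caveat you raised, and it does not come for free.
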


\textbf{Proof:} First, we prove the first half of this proposition. Specifically, we aim to demonstrate that for sufficiently large $k'$, the expression $\hat{\mathbf{A}}^{k}\mathbf{V}, \forall k\geq k'$, will be over-smoothed.
This phenomenon can be interpreted through random walk theory. The attention matrix $\hat{\mathbf{A}}$ is normalized via Softmax, thereby allowing us to interpret $\hat{\mathbf{A}}$ as the probability transition matrix of a Markov chain. Each entry within this matrix is strictly positive due to the Softmax normalization, which implies the Markov chain is irreducible. Irreducibility ensures that the Markov chain converges to a unique stationary distribution denoted by $\Mat{\pi}$. Let $\Mat{a}_i$ represent the $i$-th row of $\Mat{A}$. It follows that for all $i = 1, \cdots, n$ we have $\lVert (\Mat{a}_i)^T \Mat{A} - \Mat{\pi}^T \rVert \le \lambda \lVert \Mat{a}_i - \Mat{\pi} \rVert$, where $\lambda \in (0, 1)$ is the mixing rate of the transition matrix $\Mat{A}$. Consequently, we arrive at the conclusion that, $\lim_{l \rightarrow \infty} \Mat{A}^l = \Mat{1} \Mat{\pi}^T$, which yields a low-pass filter. In the case of ParaFormer, when $k$ is sufficiently large, $\mathbf{H}_k = \mathbf{\hat{A}}^{k} \mathbf{V} = \Mat{1} \Mat{\pi}^T \mathbf{V}$. The rank of $\mathbf{H}_k$ will collapse to $1$, and all nodes will have the same representation, resulting in the over-smoothing phenomenon.

We will now proceed to demonstrate the subsequent component of our analysis. As established in the preceding paragraph, with sufficiently large $k'$, $\forall k\geq k'$, the representations $\mathbf{H}_k$ exhibit over-smoothing. In this section, we aim to show that within the framework of gradient descent, the representations $\mathbf{H}_k$ will have minimal influence on the final learned representation $\mathbf{H}$ unless the over-smoothed representations align with the final predictions.

\newcommand{\ip}[2]{\left\langle #1, #2 \right \rangle}

Formally, the label matrix is denoted as $\mathbf{Y}\in \mathbb{R}^{n\times C}$, where each row of $\mathbf{Y}$ represents a one-hot vector. The predicted probability matrix is denoted as $\hat{\mathbf{P}} \in \mathbb{R}^{n\times C}$. Accordingly, the cross-entropy loss can be defined as follows:

\begin{equation}
    L = - \sum_{i\in V} \mathbf{Y}_{i:} \log(\hat{\mathbf{P}}_{i:})
\end{equation}

For those smoothed representations, \textit{e.g.}, when $k$ is sufficiently large, we have the following limit: $\lim_{l \rightarrow \infty} \mathbf{H}_k = \mathbf{\hat{A}}^k \mathbf{V} = \Mat{1} \Mat{\pi}^T \mathbf{H}_0$. This can also be expressed as: $\mathbf{H}_k = \Mat{1} \Mat{\pi}^T \mathbf{H}_0 + o_k(1)$.
It follows that we can derive the prediction matrix as: $\hat{\mathbf{P}} = \text{Softmax}_\eta (\mathbf{H})$, where $\mathbf{H}= \sum_{k=0}^{K} \gamma_k \mathbf{H}_k$. The function $\text{Softmax}_\eta$ incorporates an additional smoothing parameter $\eta$. when $\eta=1$, $\text{Softmax}_\eta$ reduces to the standard Softmax function. We denote $\Mat{\beta}^T = \Mat{\pi}^T \mathbf{H}_0$. Consequently, we can compute the gradient with respect to $\gamma_k$:
\begin{equation}
\begin{aligned}
    \frac{\partial L}{\partial \gamma_k}
    &= \sum_{i\in \mathcal{T}} \eta (\hat{\mathbf{P}}_{i:} - \mathbf{Y}_{i:}) \mathbf{H}_{k, i:}\\
    &= \sum_{i\in \mathcal{T}} \eta (\hat{\mathbf{P}}_{i:} - \mathbf{Y}_{i:}) \Mat{\beta}^T + o_k(1)
\end{aligned}
\end{equation}

Here, we also introduce a lemma and the definition defined in GPRGNN ~\cite{GPRGNN21}.

\begin{lemma}\label{lma:softmax2argmax}
  For any real vector $\boldsymbol{\beta}\in \mathbb{R}^C$ and and sufficiently large $\eta>0$, the following holds: $\text{softmax}_{\eta}(\boldsymbol{\beta}) = \mathbf{1}[\boldsymbol{\beta}] + o_\eta(1)$, where $\mathbf{1}[\boldsymbol{\beta}] \in \mathbb{R}^C$ is defined such that $\mathbf{1}[\boldsymbol{\beta}]_{\text{argmax}(\boldsymbol{\beta})}=1$, and takes the value $0$ at all other indices.
\end{lemma}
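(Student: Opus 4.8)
The plan is to work directly from the explicit form of the tempered softmax. Recall that $\text{softmax}_\eta(\boldsymbol{\beta})_i = e^{\eta \beta_i} / \sum_{j=1}^C e^{\eta \beta_j}$, so that $\eta = 1$ recovers the standard softmax used in \cref{eq:classification:mlp}. Let $i^\star = \text{argmax}(\boldsymbol{\beta})$ and assume, as the definition of $\mathbf{1}[\boldsymbol{\beta}]$ implicitly requires, that the maximum is attained at a single index; set the gap $\delta = \beta_{i^\star} - \max_{j \neq i^\star} \beta_j > 0$. The goal is to show both that the $i^\star$-th coordinate tends to $1$ and that every other coordinate tends to $0$, each at a rate controlled by $e^{-\eta\delta}$. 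I would split the proof into these two coordinatewise estimates.

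First I would handle the non-maximal coordinates. For any $j \neq i^\star$, retaining only the $i^\star$ term in the denominator gives the bound
\[
\text{softmax}_\eta(\boldsymbol{\beta})_j = \frac{e^{\eta\beta_j}}{\sum_{k} e^{\eta\beta_k}} \le \frac{e^{\eta\beta_j}}{e^{\eta\beta_{i^\star}}} = e^{\eta(\beta_j - \beta_{i^\star})} \le e^{-\eta\delta},
\]
which is $o_\eta(1)$ as $\eta \to \infty$. Next I would treat the maximal coordinate by factoring out $e^{\eta\beta_{i^\star}}$ from numerator and denominator:
\[
\text{softmax}_\eta(\boldsymbol{\beta})_{i^\star} = \frac{1}{1 + \sum_{j \neq i^\star} e^{\eta(\beta_j - \beta_{i^\star})}}.
\]
Since each exponent satisfies $\beta_j - \beta_{i^\star} \le -\delta$, the sum in the denominator is at most $(C-1)e^{-\eta\delta} \to 0$, so the whole expression tends to $1$; quantitatively $1 - \text{softmax}_\eta(\boldsymbol{\beta})_{i^\star} \le (C-1)e^{-\eta\delta} = o_\eta(1)$. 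Stacking these two estimates across all $C$ coordinates yields $\text{softmax}_\eta(\boldsymbol{\beta}) = \mathbf{1}[\boldsymbol{\beta}] + o_\eta(1)$, with the error in fact decaying exponentially in $\eta$.

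The only genuine subtlety, and the step I would flag as the main obstacle, is the assumption of a unique maximizer. If several coordinates tie for the maximum, the tempered softmax concentrates its mass uniformly across the tied set rather than on a single index, so $\delta = 0$ and the limit is no longer the one-hot vector $\mathbf{1}[\boldsymbol{\beta}]$ without adopting a tie-breaking convention. I would therefore state the lemma as holding for $\boldsymbol{\beta}$ whose maximum is attained uniquely, which is the generic case. In the application the relevant vector $\boldsymbol{\beta}^T = \boldsymbol{\pi}^T \mathbf{H}_0$ arises from a learned representation and can be taken to have distinct entries, so this uniqueness hypothesis is harmless; under it, the two exponential bounds above complete the argument.
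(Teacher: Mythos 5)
Your proof is correct, and it is worth noting that the paper itself never proves this lemma: it is stated verbatim and imported from GPRGNN~\cite{GPRGNN21}, so there is no in-paper argument to compare against. Your two coordinatewise estimates are the standard route and supply exactly what the citation elides. For $j \neq i^\star$, bounding the denominator below by its $i^\star$ term gives $\text{softmax}_\eta(\boldsymbol{\beta})_j \le e^{-\eta\delta}$, and for the maximal coordinate, factoring out $e^{\eta\beta_{i^\star}}$ gives $1 - \text{softmax}_\eta(\boldsymbol{\beta})_{i^\star} \le (C-1)e^{-\eta\delta}$ (since $1 - \frac{1}{1+x} \le x$); both bounds are valid and in fact strengthen the stated $o_\eta(1)$ to an explicit $O(e^{-\eta\delta})$ rate. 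Your flag on uniqueness of the maximizer is the most substantive contribution of the write-up: the lemma as stated (and the definition of $\mathbf{1}[\boldsymbol{\beta}]$, which presumes a well-defined single $\text{argmax}$) is false when the maximum is tied, since the tempered softmax then converges to the uniform distribution on the tied set with $\delta = 0$, and neither this paper nor the cited source makes the hypothesis explicit. This caveat propagates to the application in the proof of Theorem~3, where the lemma is invoked on $c_0\boldsymbol{\beta}^T$ with $\boldsymbol{\beta}^T = \boldsymbol{\pi}^T \mathbf{H}_0$: distinctness of the entries of $\boldsymbol{\beta}$ is generic for learned representations, as you say, but it is an assumption rather than a consequence, and stating the lemma under a unique-argmax hypothesis, as you propose, is the right repair.
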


\begin{definition}[The over-smoothing phenomenon]\label{def:oversmoothing}
To enhance clarity in the notation, in the subsequent proof we introduce a new variable to denote the final representation: $\mathbf{Z} = \sum_{k}\gamma_k \mathbf{H}^{(k)}$. If over-smoothing occurs in the ParaFormer for $K$ sufficiently large, we have $\mathbf{Z} = c_0 
\mathbf{1} \Mat{\beta}^T,\;\forall j\in [C]$ for some $c_0>0$ if $\gamma_k>0$ and $\mathbf{Z} = - c_0 
\mathbf{1} \Mat{\beta}^T,\;\forall j\in [C]$ for some $c_0>0$ if $\gamma_k<0$.
\end{definition}

Given the definition and of over-smoothing, the gradient can be reformulated as follows:

\begin{equation}
\begin{aligned}
    \frac{\partial L}{\partial \gamma_k}
    &= \sum_{i\in \mathcal{T}} \eta (\hat{\mathbf{P}}_{i:} - \mathbf{Y}_{i:}) \Mat{\beta}^T + o_k(1)\\
    &= \sum_{i\in \mathcal{T}} \eta
    (\frac{e^{\eta \mathbf{Z}_{i:}}}{\sum_{j\in[C]} e^{\eta \mathbf{Z}_{ij}}} - \mathbf{Y}_{i:}) \Mat{\beta}^T + o_k(1)\\
\end{aligned}
\end{equation}
From Definition \ref{def:oversmoothing}, we first discuss the situation of $\gamma_k > 0$. We can simplify the Softmax term via \ref{lma:softmax2argmax} and calculate the gradient as:
\begin{equation}
\begin{aligned}
    \frac{\partial L}{\partial \gamma_k}
    &= \sum_{i\in \mathcal{T}} \eta 
    (\frac{e^{\eta c_0 \Mat{\beta}^T}}{\sum_{j\in[C]} e^{\eta \mathbf{Z}_{ij}}} - \mathbf{Y}_{i:}) \Mat{\beta}^T + o_k(1)\\
    &= \sum_{i\in \mathcal{T}} \eta 
    (\mathbf{1}[c_0 \Mat{\beta}^T] - \mathbf{Y}_{i:}) \Mat{\beta}^T + o_k(1) + o_\eta(1) \\
    &= \sum_{i\in \mathcal{T}} \eta 
    \bigg(\max_{j\in [C]}\boldsymbol{\beta}^T_j - \boldsymbol{\beta}^T_{\mathbf{1}[\mathbf{Y}_{i:}]}\bigg) + o_k(1) + o_\eta(1)
\label{eqa:positive}
\end{aligned}
\end{equation}

Similarly, we can have when $\gamma_k < 0$, the gradient can be reformulated as:
\begin{equation}
\begin{aligned}
    \frac{\partial L}{\partial \gamma_k}
    &= \sum_{i\in \mathcal{T}} \eta 
    (\frac{e^{\eta c_0 \Mat{\beta}^T}}{\sum_{j\in[C]} e^{\eta \mathbf{Z}_{ij}}} - \mathbf{Y}_{i:}) \Mat{\beta}^T + o_k(1)\\
    &= \sum_{i\in \mathcal{T}} \eta 
    (\mathbf{1}[c_0 \Mat{\beta}^T] - \mathbf{Y}_{i:}) \Mat{\beta}^T + o_k(1) + o_\eta(1) \\
    &= \sum_{i\in \mathcal{T}} \eta 
    \bigg(\min_{j\in [C]}\boldsymbol{\beta}^T_j - \boldsymbol{\beta}^T_{\mathbf{1}[\mathbf{Y}_{i:}]}\bigg) + o_k(1) + o_\eta(1)
\label{eqa:negative}
\end{aligned}
\end{equation}

In conclusion, when $\gamma_k > 0$, $\frac{\partial L}{\partial \gamma_k}$ will also be positive, disregarding the $\mathcal{O}(1)$, and vice versa. In the framework of gradient descent optimization, the $\gamma_k$ will decrease if it is greater than $0$ and increase if it is less than $0$. With an appropriately chosen learning rate, $\gamma_k$ can can converge towards $0$ to alleviate the over-smoothing effect.

\section{Details of Datasets}
\label{appendix.dataset}
\begin{table}[t]
  \centering
  \caption{The details of the datasets, including the property and the number of nodes, edges, features and classes.}
    \begin{tabular}{l|r|r|r|r}
    \toprule
    Dataset & \multicolumn{1}{l|}{\# Nodes} & \multicolumn{1}{l|}{\# Edges} & \multicolumn{1}{l|}{\# Features} & \multicolumn{1}{l}{\# Classes} \\
    \midrule
    Cora & 2,708 & 5,429 & 1,433 & 7 \\
    Citeseer & 3,327 & 4,732 & 3,703 & 6 \\
    PubMed & 19,717 & 44,324 & 500   & 3 \\
    Film & 7,600 & 29,926 & 931   & 5 \\
    Squirrel & 5,201 & 216,933 & 2,089 & 5 \\
    Chameleon & 2,277 & 36,101 & 2,325 & 5 \\
    Deezer & 28,281 & 92,752 & 31,241 & 2 \\
    \midrule
    arXiv-year & 169,343 & 1,166,243 & 128   & 40 \\
    Amazon2M & 2,449,029 & 61,859,140 & 100   & 47 \\
    pokec & 1,632,803 & 30,622,564 & 65    & 2 \\
    ogbn-arxiv & 169,343 & 1,166,243 & 128   & 40 \\
    \bottomrule
    \end{tabular}
  \label{tab:dataset stats}
\end{table}

In this section, we will introduce the detailed information of these datasets. The main features of the datasets we utilize in this work are listed in Table~\ref{tab:dataset stats}.

Cora, Citeseer and Pubmed \cite{Sen08collectiveclassification} represent scientific publications as nodes, with citations between them forming edges. Node features consist of bag-of-words representations of the publication's abstract, while node labels denote the publication's subject category. Cora comprises 2,708 publications in seven computer science subfields, CiteSeer encompasses 3,327 publications across six computer science categories, and PubMed includes 19,717 publications from the PubMed database, classified into three medical categories. We employ the full-supervised settings following \cite{he2021bernnet}, which employs a random 60\%/20\%/20\% train/valid/test split.

Film \cite{largegraph-app-3} dataset encompasses a network of 7,600 actors (nodes) connected by 29,926 edges representing shared appearances on Wikipedia pages. Each actor's node is characterized by keywords extracted from their corresponding Wikipedia entry, serving as features. The dataset aims to categorize these actors into five distinct classes based on the content of their Wikipedia profiles. Notably, this graph exhibits low homophily, indicating a tendency for actors within the same category to be less interconnected compared to real-world social networks.

The Squirrel and Chameleon datasets \cite{rozemberczki2021multiscale}, derived from Wikipedia, represent page-page networks centered around specific topics. Nodes correspond to individual pages, interconnected by edges indicating mutual links. Each node's features comprise a set of informative nouns extracted from the page content. The task associated with these datasets involves classifying nodes into five categories based on their average monthly traffic. Notably, Squirrel and Chameleon exhibit significant label heterophily, meaning linked nodes often belong to different traffic categories. This inherent characteristic makes accurate classification more complex, demanding models capable of discerning subtle patterns within the network structure and node attributes. A recent work \cite{makingprogress} demonstrates there are overlapping nodes between training and testing in original Squirrel and Chameleon datasets, which will result data leakage. Furthermore, this work solve this problem and propose new splits. In our work, we thus follow the updated splits.

The Deezer-Europe \cite{deezer20} dataset, derived from the Deezer music streaming platform, presents a user-user friendship network encompassing European users. Nodes in this network symbolize individual users, with edges denoting reciprocal friendships. User preferences, specifically the artists they have `liked', constitute the node features. This dataset poses the challenge of gender prediction, aiming to classify users based on their musical tastes and connections. Following established benchmarks \cite{newbench}, the dataset is partitioned into a 50\%/25\%/25\% split for training, validation, and testing, respectively.

The ogbn-arxiv \cite{ogb-20} dataset presents a comprehensive citation network of Computer Science (CS) research papers on arXiv. This network models each paper as a node, with edges representing citations between them. Each node is characterized by a 128-dimensional feature, derived by averaging the word embeddings of the paper's title and abstract. These embeddings are generated using the WORD2VEC model. The dataset's primary task is classifying papers into one of 40 specific CS subject areas, enabling the exploration of topical relationships within the field. Following the established split \cite{ogb-20}, we train models on papers published before 2017, validate on those from 2018, and test on papers published from 2019 onwards.

The Amazon2M \cite{amazoncopurchase-kdd15} dataset, constructed from the Amazon co-purchasing network, models product relationships as a graph. Nodes in this graph symbolize individual products, while edges signify frequent co-purchasing patterns. Each product node is characterized by a bag-of-words representation derived from its textual description. Product categorization, specifically the top-level category a product belongs to, serves as the node label. Following established practices \cite{Nodeformer22}, we employed a 50\%/25\%/25\% random split for training, validation, and testing.

The Pokec \cite{leskovec2016snap} dataset offers a rich snapshot of social network interactions, comprising detailed user profiles with attributes such as geographic location, age, date of registration, and declared interests. This dataset leverages these features to predict user gender, providing a challenging classification task. Following previous work \cite{SGFormer23}, we randomly partition the dataset into training (10\%), validation (10\%), and testing (80\%) sets.

The arXiv-year \cite{newbench} dataset, derived from the ogbn-arxiv network, offers a unique perspective on scientific publications by employing publication year as the target label instead of subject areas. This dataset comprises arXiv papers as nodes, with directed edges indicating citation relationships. Each node's features are represented by averaged word2vec embeddings of the paper's title and abstract, capturing semantic information. The dataset is carefully partitioned into five classes based on publication year ranges (pre-2014, 2014-2015, 2016-2017, 2018, and 2019-2020) to ensure balanced class distribution. For the split, we employed a 50\%/25\%/25\% random partition for training, validation, and testing.

\section{Implementation Details}
\label{appendix:imple_details}
In this section, we will provide more detailed information about the implementation of ParaFormer for reproducibility.

\subsection{Notation \& Algorithm}
Table~\ref{tab:notation} provides the frequently-used notations throughout the paper. 
Algorithm~\ref{alg:scalGPA} presents the forward pass algorithm of computing the scalability GPA. Here, $K$ attention blocks share the parameter matrices $\mathbf{W}_Q$, $\mathbf{W}_K$ $\mathbf{W}_V$.
The matrix $\mathbf{M}$ caches the intermediate result of $(\mathbf{\hat{K}}^{T} \mathbf{\hat{Q}})^{(k-1)} (\mathbf{\hat{K}}^{T}\mathbf{V})$.

\begin{table}[t]
\footnotesize
\centering
\caption{Frequently Used Notations}
\label{tab:notation}

\begin{tabular}{c|c}
\toprule
Notation & Description                 \\\midrule
$\mathcal{G} = (\mathcal{V}, \mathcal{E})$ & a graph \\
$\textbf{A}$ & the adjacency matrix \\ 
$\lambda$ & the smoothing rate \\
$\gamma_k$ & the weights of GPR \\ 
$K$ & the number of layers \\ 
$\mathbf{Q}$, $\mathbf{K}$, $\mathbf{V}$ & the query, key, value matrices \\ 
$\mathbf{\hat{A}}$ & the attention matrix \\
$\mathbf{H}$/$\mathbf{X}$ & the token/node features \\
 \bottomrule
\end{tabular}

\end{table}

\subsection{ParaFormer}
To ensure a fair comparison with SGFormer, the implementation of ParaFormer$_{\text{GCN}}$ maintains consistency in the GCN architecture employed for each dataset. This encompasses utilizing the same GCN layer, hidden dimension, and the weighting assigned to the graph representation's contribution to the final hidden dimension. For these detailed hyper-parameters, please refer to the original paper and repository of SGFormer.

In the implementation of ParaFormer$_{\text{GCN}}$, we fix the hyper-parameter $K$ as 10 across all datasets.
The remaining hyperparameters are determined through a grid search strategy within the following search space:
\begin{itemize}
    \item learning rate within $\{0.005, 0.01, 0.05, 0.1\}$.
    \item hidden dimension within $\{64, 96, 128, 256\}$.
    \item dropout rate within $\{0.3, 0.4, 0.5, 0.6, 0.7\}$.
\end{itemize}

In the implementation of ParaFormer$_{\text{GPRGNN}}$, we set the $K$ as 10 both for ParaFormer and GPRGNN. In our implementation, instead of paralellizing the ParaFormer and GNN block, we turn to another more natural approach to integrate the structural information. Specifically, we sum up the GPR enhanced attention matrix with GPR enhanced adjacency matrix as: $\mathbf{H} = \sum_{k=0}^{10} (\lambda_k \mathbf{\hat{A}}^k + \lambda_{k}^{'}\mathbf{A}^{k})\mathbf{V}.$
As other hyper-parameters, we search the hyper-parameters within the space:
\begin{itemize}
    \item learning rate within $\{0.001, 0.005, 0.01, 0.05\}$.
    \item weight decay within $\{0.0, 1e-4, 5e-4, 1e-3\}$.
    \item hidden dimension within $\{64, 128, 256, 512\}$.
    \item dropout rate within $\{0.3, 0.5, 0.7, 0.8\}$.
    \item graph weight within $\{0.3, 0.5, 0.6, 0.7, 0.8, 0.9\}$.
\end{itemize}
Following the SGFormer, we adopt a full-batch training approach for the medium-sized graphs, ogbn-arxiv and arxiv-year. For the larger datasets, Amazon2M and pokec, a mini-batch training method is utilized. The batch size is set as 0.1M, which is the same as SGFormer. Furthermore, we also utilize the graph partitioning strategy employed in SGFormer.

\end{document}